\newcommand{\M}{\mathcal{M}}
\newcommand{\e}{\mathbf{e}}
\newcommand{\D}{\mathcal{D}}
\newcommand\norm[1]{\left\lVert#1\right\rVert}
\DeclareMathOperator*{\argmin}{arg\,min}
\def\pkg#1{\emph{#1}}
\def\fn#1{\texttt{#1}}
\def\code#1{\texttt{#1}}
\renewenvironment{example*}{\begin{alltt}}{\end{alltt}}
\begin{document}

\title{DPpack: An R Package for Differentially Private Statistical Analysis and Machine Learning}

\author{\name Spencer Giddens \email sgiddens@nd.edu \\
       \addr Department of Applied and Computational Mathematics and Statistics\\
       University of Notre Dame, Notre Dame, IN 46556, USA
       \AND
       \name Fang Liu \email fliu2@nd.edu \\
       \addr Department of Applied and Computational Mathematics and Statistics\\
       University of Notre Dame, Notre Dame, IN 46556, USA}

\editor{My editor}

\maketitle

\vspace{-9pt}\begin{abstract}%   <- trailing '%' for backward compatibility of .sty file
Differential privacy (DP) is the state-of-the-art framework for guaranteeing privacy for individuals when releasing aggregated statistics or building statistical/machine learning models from data.
We develop the open-source R package \pkg{DPpack} that provides a large toolkit of differentially private analysis. 
The current version of \pkg{DPpack} implements three popular mechanisms for ensuring DP: Laplace, Gaussian, and exponential.
Beyond that, \pkg{DPpack} provides a large toolkit of easily accessible privacy-preserving descriptive statistics functions.
These include mean, variance, covariance, and quantiles, as well as histograms and contingency tables.
Finally, \pkg{DPpack} provides user-friendly implementation of privacy-preserving versions of logistic regression, SVM, and linear regression, as well as differentially private hyperparameter tuning for each of these models.
This extensive collection of implemented differentially private statistics and models permits hassle-free utilization of differential privacy principles in commonly performed statistical analysis.
We plan to continue developing \pkg{DPpack} and make it more comprehensive by including more differentially private machine learning techniques, statistical modeling and inference in the future.
\end{abstract}

\begin{keywords}
  differential privacy, empirical risk minimization, support vector machines, privacy-preserving, R, randomized mechanism, regression
\end{keywords}

%--------------------------------------------------------
\section{Introduction}\vspace{-3pt}

Data is an invaluable resource harnessed to inform impactful technology development and guide decision-making.
However, utilizing data that contain personally sensitive information (e.g., medical or financial records) poses privacy challenges.
Anonymized datasets, as well as statistics and models derived from sensitive datasets are susceptible to attacks that may result in the leakage of private information \citep{Narayanan2008,Ahn2015,Sweeney2015,Shokri2017,Zhao2021}.
As technology continues to evolve to become more data-reliant, privacy issues will become increasingly more prevalent, necessitating easy access to tools that provide privacy guarantees when releasing information from sensitive datasets.

Differential privacy (DP) \citep{Dwork2006Calibrating} is a popular state-of-the-art framework for providing provable guarantees of privacy for outputs from a statistical or machine learning (ML) procedure.
A variety of randomized procedures and mechanisms exist to achieve DP guarantees for a wide range of analyses. 
These include, to list some examples, summary statistics \citep{Dwork2006Calibrating,Smith2011}, empirical risk minimization \citep{chaudhuri2011,Kifer2012}, classifiers \citep{Chaudhuri2009,Vaidya2013}, deep learning \citep{Abadi2016,Bu2020}, Bayesian networks \citep{Zhang2017}, Bayesian procedures \citep{Dimitrakakis2014, Wang2015}, statistical hypothesis testing \citep{Gaboardi2016,Couch2019,Barrientos2019}, confidence interval construction \citep{Karwa2018,Wang2019}, and synthetic data generation \citep{zhang2017privbayes,torkzadehmahani2019dp,bowen2020comparative}.
Privacy-preserving analysis has also been adopted by many companies in the technology sector, including Google \citep{GoogleDP}, Apple \citep{AppleDP}, Meta \citep{MetaDP}, as well as government agencies like the U.S. Census Bureau \citep{USCensusDP}. 

Given the popularity of DP, many open-source projects have been devoted to developing tools and code for privacy-preserving analysis with DP.
The \pkg{OpenDP Library} \citep{OpenDPpackage}, Google's DP libraries \citep{GoogleDPpackage} (with accompanying OpenMined Python wrapper \pkg{PyDP} \citep{OpenMinedDPpackage}), the \pkg{TensorFlow Privacy} library \citep{TensorFlowPrivacy}, and the IBM's DP library \pkg{diffprivlib} \citep{diffprivlib} collectively provide tools for DP analysis for the Rust, Python, C++, Go, and Java programming languages.

This paper presents the \pkg{DPpack}R package (\url{https://github.com/sgiddens/DPpack}) \citep{DPpack}, which provides convenient implementations of common DP procedures.
R is arguably the most popular languages among statisticians.
Prior to \pkg{DPpack}, R packages \pkg{diffpriv} \citep{diffpriv} and \pkg{PrivateLR} \citep{PrivateLR} were the only available DP R packages.
Both of these packages are limited in scope compared to \pkg{DPpack}; \pkg{PrivateLR}, in fact, implements only a single function for DP logistic regression.
Additionally, neither package has seen an update in the last five years.
Meanwhile, \pkg{DPpack} has been downloaded from CRAN by R users $\sim$4,000 times as of September 2023, averaging 242 downloads per complete month of being available, overtaking \pkg{diffpriv} and \pkg{PrivateLR} to become the most downloaded DP-focused R package in the past 10 months.
While \pkg{diffpriv} does implement several randomized mechanisms for DP, \pkg{DPpack} goes well beyond these basic mechanisms by specifically implementing privacy-preserving versions of various commonly used descriptive statistics, as well as statistical analysis and machine learning procedures.
The implemented functions are accessible even to individuals without a strong background in DP because sensitivity calculations are handled internally based on proven theoretical results and user-provided bounds on the input data.
This makes \pkg{DPpack} more user-friendly than \pkg{diffpriv} for non-expert users.
Even for DP experts, \pkg{DPpack} is attractive due to its scope.
No other R package implements as extensive a collection of privacy-preserving functions.
We plan to continue to develop and update the package by adding more privacy-preserving analysis procedures in the future.

\vspace{-6pt}\section{Capabilities}\label{sec:capabilities}
\vspace{-3pt}\subsection{Randomized Mechanisms}\vspace{-3pt}
\pkg{DPpack} provides the \fn{LaplaceMechanism}, \fn{GaussianMechanism}, and \fn{ExponentialMechanism} functions for implementing general mechanisms for ensuring DP for a desired output.

The \fn{LaplaceMechanism} function implements the Laplace mechanism \citep{Dwork2006Calibrating} for ensuring $\epsilon$-DP for a statistical analysis or function by adding to the output Laplacian noise with a scale parameter dependent on the function's $\ell_1$-global sensitivity and the privacy budget $\epsilon$.
The function generalizes using DP composition to multidimensional function inputs, in which case it allows the user to specify the allocation of the privacy budget across the multiple computations.

The \fn{GaussianMechanism} function implements the Gaussian mechanism \citep{DPtextbook}.
It can be used to ensure either approximate $(\epsilon, \delta)$-DP \citep{Dwork2006OurData}, or probabilistic $(\epsilon, \delta)$-DP \citep{Machanavajjhala2008, Liu2019Generalized}, depending on user input.
It adds Gaussian noise with a variance dependent on $\epsilon$, $\delta$, and the function's $\ell_2$-global sensitivity, and can be generalized to multidimensional inputs.

The \fn{ExponentialMechanism} function implements the exponential mechanism \citep{McSherry2007}, which guarantees $\epsilon$-DP and returns a result randomly from a set of possible candidates, with probability proportional to its ``utility.''
This allows for DP releases of non-numeric information, to which adding numerical noise would be nonsensical.

\vspace{-3pt}\subsection{Privacy-preserving Descriptive Statistics}\vspace{-3pt}

One of the unique aspects of \pkg{DPpack} compared to the other DP R packages is that it provides direct support for DP-satisfying versions of many common descriptive statistics.
The \fn{meanDP}, \fn{varDP}, \fn{covDP}, and \fn{sdDP} functions of \pkg{DPpack} provide DP counterparts to the analogously named R functions for calculating mean, variance, covariance, and standard deviation of a data vector. 
Pooled variances and covariances are also available with \fn{pooledVarDP} and \fn{pooledCovDP}.
Through function arguments, a user specifies whether the output should satisfy $\epsilon$-DP via the Laplace mechanism or $(\epsilon,\delta)$-DP via the Gaussian mechanism and global bounds on the data, from which appropriate $\ell_p$-global sensitivities are computed internally based on known theoretical results \citep{Liu2019Statistical}.

The \fn{histogramDP} and \fn{tableDP} functions compute DP histograms and contingency tables.
Similar to previously described statistics, users may specify which mechanism and type of DP are used for the output, and additional arguments help format the ouptut.
Global bounds on the data are unnecessary as the global sensitivity is a fixed constant for frequency output.

\pkg{DPpack} implements differentially private quantiles and medians using the \fn{quantileDP} and \fn{medianDP} functions, respectively.
By again only requiring the user to input global bounds on the data, these release $\epsilon$-DP values via the exponential mechanism using the private quantile algorithm \citep{Smith2011, Gillenwater2021}.

\vspace{-3pt}\subsection{Privacy-preserving Statistical Models and Machine Learning}\vspace{-3pt}

\begin{comment}
Given a set of $n$ input-output pairs $(\mathbf{x}_i, y_i)$ representing a sensitive training dataset, a loss function $\ell$, a predictor function $f$ parameterized by a vector $\boldsymbol{\theta}$, and a regularizer function $R$, empirical risk minimization (ERM) seeks to solve the problem
\begin{equation}\label{eqn:ERM}
\argmin_{f_{\boldsymbol{\theta}}\in\mathcal{F}}\left\{ \frac{1}{n}\sum_{i=1}^n \ell(f_{\boldsymbol{\theta}}(\mathbf{x}_i), y_i) + \frac{\gamma}{n}R(\boldsymbol{\theta})\right\},
\end{equation}
where $\gamma>0$ is a tunable regularization constant and $\mathcal{F}$ is a predictor function space governing specific instances of ERM.
\end{comment}

Empirical risk minimization (ERM) is a statistical learning principle to find the best model from a given set of models.
The goal of ERM is to minimize the empirical risk that measures the goodness of fit of a model to the training data.
We implemented privacy-preserving procedures for a few ERM problems in supervised learning.
Specifically, for binary classification, we create the \fn{EmpiricalRiskMinimizationDP.CMS} class by employing the methods from \citet{chaudhuri2011} for guaranteeing $\epsilon$-DP for the output of training via ERM under necessary regularity conditions.
Either the output or objective perturbation methods can be used.
For linear regression, we employ the methods from \citet{Kifer2012} to create the \fn{EmpiricalRiskMinimizationDP.KST} class for guaranteeing either $\epsilon$-DP or $(\epsilon, \delta)$-DP under necessary regularity conditions. 
The intent is that these classes are used through an inheritance structure to implement binary classifiers or regressors as instances of ERM.

Specifically, logistic regression and support vector machine (SVM) models with $\epsilon$-DP guarantees \citep{Chaudhuri2009, chaudhuri2011} are implemented via the \fn{LogisticRegressionDP} and \fn{svmDP} classes, respectively.
Each of these classes inherits from \fn{EmpiricalRiskMinimizationDP.CMS}.
Released trained model coefficients or predictions made on new data satisfy $\epsilon$-DP.
Linear regression of either $\epsilon$-DP or $(\epsilon, \delta)$-DP \citep{Kifer2012} is implemented in \pkg{DPpack} via the \fn{LinearRegressionDP} class, which inherits from the \fn{EmpiricalRiskMinimizationDP.KST} class.
Released trained model coefficients from those classes or predictions made on new data using these coefficients also satisfy user-specified DP guarantees.
The \fn{svmDP} class currently supports $\epsilon$-DP training via the linear and radial (Gaussian) kernels, with the radial kernel method being based on an approximation technique from \citet{Rahimi2007, Rahimi2008, chaudhuri2011}.
Training with individually weighted loss function contributions with $\epsilon$-DP guarantees is also supported \citep{Giddens2023}.
Each of these methods is user-friendly, even to those without a strong DP background, as they only require the user to specify certain hyperparameters (such as $\epsilon$, $\delta$, and $\gamma$) and global bounds on each feature contained in $\mathbf{x}_i$ (and $y_i$, in the case of linear regression).
Sensitivity calculations and scaling necessary to satisfy regularity conditions for DP guarantees are handled internally.

When the selection of hyperparameter values (e.g., the regularization constant in the ERM loss function) uses information from the sensitive dataset itself, the incurred privacy loss needs to be accounted for.
\pkg{DPpack} provides the \fn{tune\_classification\_model} function for privacy-preserving hyperparameter tuning for binary classifiers based on the exponential mechanism \citep{chaudhuri2011} and the \fn{tune\_linear\_regression\_model} function for hyperparameter tuning for linear regression. % Inspired by the hyperparameter tuning algorithm from \citet{chaudhuri2011} and the feature selection algorithm from \citet{Kifer2012}, \pkg{DPpack} also implements a similar function, 

\vspace{-6pt}\section{Summary and Future Work}\vspace{-3pt}

The \pkg{DPpack} package implements three general mechanisms for DP (Laplace, Gaussian, and exponential), a variety of DP descriptive statistics, and some privacy-preserving regression and classification methods. 
Making these functions accessible independent of the mechanisms they are based on permits code simplicity and ease-of-use (since users do not need to know how to compute sensitivities for their desired statistics, but only need to give global bounds on the data as inputs).
Compared with other options for DP in R, \pkg{DPpack} offers a more complete set of privacy-preserving functions and models in a user-friendly manner that makes them easily accessible even to those without a strong background in DP.

We plan to keep developing the package and make it more comprehensive.
For example, for ML techniques, we may include functionality for DP principal component analysis \citep{Dwork2014, Chaudhuri2013}, Bayesian networks \citep{Zhang2016}, and stochastic gradient descent based on the concepts of moment accountant \citep{Abadi2016} and Gaussian DP \citep{Bu2020}, to name a few.
For statistical analysis, we plan to include functionality for differently private $z$-tests \citep{Gaboardi2019}, $t$-tests \citep{Ding2018}, and some nonparametric tests (e.g. Wilcoxon rank sum test) \citep{Couch2019}, as well as hypothesis testing for linear regression \citep{Barrientos2019,Chen2016} and confidence interval construction for certain problems \citep{Karwa2018, Wang2019}.
We note that the list above is not comprehensive nor are the cited references the only existing work on each respective topic. 

% Acknowledgements and Disclosure of Funding should go at the end, before appendices and references

\begin{comment}
\acks{All acknowledgments go at the end of the paper before appendices and references.
Moreover, you are required to declare funding (financial activities supporting the
submitted work) and competing interests (related financial activities outside the submitted work).
More information about this disclosure can be found on the JMLR website.}
\end{comment}

\acks{This work is supported by the University of Notre Dame Schmitt Fellowship and Lucy Graduate Scholarship.}

% Manual newpage inserted to improve layout of sample file - not
% needed in general before appendices/bibliography.

% \newpage

\appendix

\section{Differential Privacy (DP)}

DP protects the information of each individual whose information is contained in a dataset by ensuring that the results of a mechanism acting on the dataset would be almost identical to the results had their information not been present in the dataset.
To formalize the notion of DP, we first define \textit{neighboring datasets}:
$D_1$ and $D_2$ are \textit{neighboring datasets} if they differ in at most one observation.
There are two equally valid methods by which a neighboring dataset $D_2$ may be constructed from a given dataset $D_1$, depending on if the number of elements of each dataset must remain the same (i.e. is bounded), or if the number is allowed to vary (i.e. is unbounded) \citep{Kifer2011}.
\begin{definition}[Bounded neighboring datasets]
    We consider $D_1$ and $D_2$ to be \textnormal{bounded neighboring datasets} if they are neighboring datasets and $D_1$ can be obtained from $D_2$ by modifying at most one observation.
\end{definition}
\begin{definition}[Unbounded neighboring datasets]
    We consider $D_1$ and $D_2$ to be \textnormal{unbounded neighboring datasets} if they are neighboring datasets and $D_1$ can be obtained from $D_2$ by adding or removing at most one observation.
\end{definition}

The two definitions of neighboring datasets may necessitate different amounts of calibrated noise to achieve the same level of privacy guarantees when releasing the same statistics (e.g, histograms). 
When the sample size is large, the difference between the two is largely ignorable. 

We can now formally define a few different types of differential privacy.
\begin{definition}[Differential privacy] \textnormal{\citep{Dwork2006Calibrating, Dwork2006OurData}}
    A randomized mechanism $\M$ satisfies \textnormal{$(\epsilon,\delta)$-differential privacy} if for all $S\subseteq \textnormal{Range}(\M)$, 
    \begin{equation}
        P(\M(D_1)\in S) \le e^\epsilon P(\M(D_2)\in S) + \delta
    \end{equation} for any neighboring datasets $D_1$ and $D_2$, where $\epsilon>0$ and $\delta\ge0$ are privacy loss parameters.
    It is common to refer to $(\epsilon,0)$-DP (or $\epsilon$-DP) as ``pure'' DP, and $(\epsilon,\delta)$-DP ($\delta>0$) as ``approximate'' DP.
\end{definition}
\begin{definition}[Probabilistic differential privacy] \textnormal{\citep{Machanavajjhala2008}}
     A randomized mechanism $\M$ satisfies \textnormal{$(\epsilon,\delta)$ probabilistic differential privacy} if for all $S\subseteq \textrm{Range}(\M)$, 
    \begin{equation}
        P\bigg(\bigg|\log\bigg(\frac{P(\M(D_1)\in S)}{P(\M(D_2)\in S)}\bigg)\bigg|>\epsilon\bigg) \le \delta
    \end{equation} for any neighboring datasets $D_1$ and $D_2$, $\epsilon>0$ and $\delta\ge0$.
\end{definition}

Intuitively, DP guarantees that the distributions of outputs from a randomized mechanism operating on neighboring datasets are similar.
Thus, information gained from the mechanism will be essentially the same (within tunable bounds given by $\epsilon$ and $\delta$) whether a given individual's data is used in the dataset or not. 
The individual that differs between datasets is arbitrary, meaning that differential privacy provides these individual-level privacy guarantees for all members of the dataset simultaneously.

DP has several nice properties to which its popularity in research and applications is attributed. 
We will briefly mention three here that are relevant to the package, and refer the interested reader to \citet{DPtextbook} for other properties and additional information. 
The first two are composition theorems, which provide differential privacy bounds for the use of multiple randomized mechanisms on the same dataset.
\begin{theorem}[Basic sequential composition] \textnormal{\citep{mcsherry2009privacy}}
    Let $\M_1$, $\M_2,\ldots,\M_n$ be $n$ randomized mechanisms such that each $\M_i$ satisfies $(\epsilon_i, \delta_i)$-differential privacy.
    $\M(D) = (\M_1(D),\ldots,\M_n(D))$ satisfies $(\sum_{i=1}^n\epsilon_i,\sum_{i=1}^n\delta_i)$-differential privacy.
\end{theorem}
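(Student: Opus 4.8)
The plan is to reduce the $n$-fold statement to the two-mechanism case by induction and then analyze the composition of two independent mechanisms directly at the level of their output distributions. For the inductive step, assuming the claim for $n-1$ mechanisms, I would regard $\M' = (\M_1,\ldots,\M_{n-1})$ as a single mechanism that, by the inductive hypothesis, satisfies $(\sum_{i<n}\epsilon_i,\sum_{i<n}\delta_i)$-DP; since the randomness of $\M'$ is independent of that of $\M_n$, the full mechanism $\M=(\M',\M_n)$ is the composition of exactly two mechanisms, so the base case closes the induction. Thus the entire content lies in the $n=2$ case.

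For $n=2$, write $P_i$ and $Q_i$ for the output laws of $\M_i(D_1)$ and $\M_i(D_2)$ with respect to a common dominating measure. Independence of the coordinate randomness makes the joint laws of $(\M_1(D_1),\M_2(D_1))$ and $(\M_1(D_2),\M_2(D_2))$ the product measures $P_1\otimes P_2$ and $Q_1\otimes Q_2$. In the pure case ($\delta_i=0$) the argument is immediate: the density ratio factorizes, $\frac{d(P_1\otimes P_2)}{d(Q_1\otimes Q_2)}(o_1,o_2)=\frac{dP_1}{dQ_1}(o_1)\,\frac{dP_2}{dQ_2}(o_2)\le e^{\epsilon_1}e^{\epsilon_2}$, and integrating this bound over any measurable $S$ yields the $e^{\epsilon_1+\epsilon_2}$ factor.

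The hard part is the $\delta$-bookkeeping in the approximate case. A naive conditioning argument — apply the DP guarantee of $\M_2$ to each output slice and then the guarantee of $\M_1$ to the resulting $[0,1]$-valued function via level-set integration — does go through, but it amplifies the first $\delta$ by a factor $e^{\epsilon_2}$, producing $e^{\epsilon_2}\delta_1+\delta_2$ rather than the claimed $\delta_1+\delta_2$. To obtain the tight bound I would instead invoke the mixture decomposition of approximate DP: for each $i$, the defining inequality $P_i(\cdot)\le e^{\epsilon_i}Q_i(\cdot)+\delta_i$ lets me split $P_i=\mu_i+\nu_i$, where $\mu_i$ has density $\min(dP_i,e^{\epsilon_i}dQ_i)$, so that $\mu_i(\cdot)\le e^{\epsilon_i}Q_i(\cdot)$ holds pointwise, and the leftover $\nu_i$ has density $(dP_i-e^{\epsilon_i}dQ_i)_+$ with total mass $\norm{\nu_i}\le\delta_i$ — this last bound coming from applying the $(\epsilon_i,\delta_i)$-DP guarantee to the set $\{dP_i>e^{\epsilon_i}dQ_i\}$.

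I would then expand $P_1\otimes P_2=\mu_1\otimes\mu_2+\mu_1\otimes\nu_2+\nu_1\otimes\mu_2+\nu_1\otimes\nu_2$. Comparing product densities shows the ``good'' term satisfies $(\mu_1\otimes\mu_2)(S)\le e^{\epsilon_1+\epsilon_2}(Q_1\otimes Q_2)(S)$ for every measurable $S$, while the three leftover terms contribute total mass $\norm{\mu_1}\,\norm{\nu_2}+\norm{\nu_1}\,\norm{\mu_2}+\norm{\nu_1}\,\norm{\nu_2}$. Writing $\delta_i'=\norm{\nu_i}\le\delta_i$ and $\norm{\mu_i}=1-\delta_i'$, this sum equals $\delta_1'+\delta_2'-\delta_1'\delta_2'\le\delta_1+\delta_2$, which is exactly the cancellation that rescues the clean bound. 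Combining the two pieces gives $(P_1\otimes P_2)(S)\le e^{\epsilon_1+\epsilon_2}(Q_1\otimes Q_2)(S)+\delta_1+\delta_2$ for all $S$; running the identical argument with $D_1$ and $D_2$ interchanged completes the $n=2$ case and hence, via the induction, the theorem. The remaining technical care is to justify a common dominating measure so the densities exist and to confirm that the product-density inequality for $\mu_1\otimes\mu_2$ transfers from rectangles to arbitrary events $S$, which it does precisely because the bound is pointwise on densities.
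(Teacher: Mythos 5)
The paper never proves this statement: it appears in the appendix as cited background, attributed to \citet{mcsherry2009privacy}, so there is no in-paper argument to compare against and your proof must stand on its own. It does. The reduction by induction to $n=2$ is sound (the inductive hypothesis packages $(\M_1,\ldots,\M_{n-1})$ as one mechanism whose coins are independent of $\M_n$'s, which is exactly the setting of the two-mechanism case), and the two-mechanism case is handled correctly: the explicit decomposition $P_i=\mu_i+\nu_i$ with $d\mu_i=\min(dP_i,e^{\epsilon_i}\,dQ_i)$ and $d\nu_i=(dP_i-e^{\epsilon_i}\,dQ_i)_+$ is legitimate, the bound $\nu_i(\mathcal{R})\le\delta_i$ follows from applying the DP inequality to the set $\{dP_i>e^{\epsilon_i}\,dQ_i\}$, the product bound for $\mu_1\otimes\mu_2$ transfers to arbitrary measurable $S$ because it holds at the level of densities, and the cross-term accounting $\delta_1'+\delta_2'-\delta_1'\delta_2'\le\delta_1+\delta_2$ is what delivers the clean sum of deltas. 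You are also right, and it is a genuinely good observation, that the naive slice-and-condition argument only yields $e^{\epsilon_2}\delta_1+\delta_2$; the mixture decomposition is the standard repair. Two small caveats worth recording: your argument covers exactly the non-adaptive composition stated in the paper (independent coins, all mechanisms run on the same $D$), whereas the cited result is often invoked in the adaptive setting, where the product-measure structure is unavailable and one must condition on the prefix of outputs; and the existence of a common dominating measure, which you defer, is supplied simply by $P_i+Q_i$ coordinatewise, so nothing is actually missing.
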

\begin{theorem}[Parallel composition] \textnormal{\citep{mcsherry2009privacy}}
    Let $\M_1,\M_2,\ldots,\M_n$ be $n$ randomized mechanisms such that each $\M_i$ satisfies $(\epsilon_i, \delta_i)$-differential privacy, and let $D_1,D_2,\ldots,D_n$ be $n$ disjoint datasets such that their union is $D$. 
    Then we have $\M(D) = (\M_1(D_1), \ldots, \M_n(D_n))$ satisfies $(\max_i\{\epsilon_i\}, \max_i\{\delta_i\})$-differential privacy.
\end{theorem}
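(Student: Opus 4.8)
The plan is to exploit the disjointness of the blocks $D_1,\ldots,D_n$ to reduce a change in the full dataset $D$ to a change in a single block, and then to use the independence of the mechanisms to isolate that block. First I would fix neighboring datasets $D$ and $D'$ and observe that, since the blocks partition $D$ (and correspondingly $D'$), modifying, adding, or removing a single observation alters exactly one block. Call it the $j$-th block, so that $D_j$ and $D_j'$ are themselves neighboring while $D_i = D_i'$ for every $i \neq j$. Consequently the component mechanisms $\M_i$ for $i \neq j$ have identical output distributions on $D$ and on $D'$; all of the difference is concentrated in the $j$-th component.

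Next I would use the fact that the $n$ mechanisms draw independent internal randomness, so the joint output $\M(D) = (\M_1(D_1),\ldots,\M_n(D_n))$ has a product distribution. For an arbitrary measurable $S \subseteq \textnormal{Range}(\M)$, I would condition on the outputs of all mechanisms except the $j$-th. Writing $\mu_{-j}$ for the common distribution of those $n-1$ components and $S_{\mathbf{t}}$ for the section $\{t_j : (\ldots,t_j,\ldots)\in S\}$ at a fixed value $\mathbf{t}$ of the other components, the product structure gives $P(\M(D)\in S) = \int P(\M_j(D_j)\in S_{\mathbf{t}})\, d\mu_{-j}(\mathbf{t})$, and the analogous identity holds for $D'$ with the \emph{same} $\mu_{-j}$ precisely because the non-$j$ blocks are unchanged.

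The core step is then to apply the privacy guarantee of $\M_j$ pointwise inside the integral. For each fixed $\mathbf{t}$, since $D_j$ and $D_j'$ are neighboring and $\M_j$ is $(\epsilon_j,\delta_j)$-DP, we have $P(\M_j(D_j)\in S_{\mathbf{t}}) \le e^{\epsilon_j} P(\M_j(D_j')\in S_{\mathbf{t}}) + \delta_j \le e^{\max_i \epsilon_i} P(\M_j(D_j')\in S_{\mathbf{t}}) + \max_i \delta_i$, where the second inequality simply enlarges the multiplicative and additive constants to their maxima. Integrating this bound against $\mu_{-j}$ and using that $\mu_{-j}$ is a probability measure (so the constant term integrates to exactly $\max_i\delta_i$) reassembles $P(\M(D')\in S)$ and yields $P(\M(D)\in S) \le e^{\max_i\epsilon_i} P(\M(D')\in S) + \max_i\delta_i$, which is the claim.

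I expect the main obstacle to be the measure-theoretic bookkeeping rather than any deep idea: carefully justifying the sectioning of the arbitrary event $S$ and the interchange of the $(\epsilon_j,\delta_j)$ inequality with the integral, which is an application of Fubini/Tonelli for the product measure. A secondary point worth stating explicitly is the independence assumption on the mechanisms' internal randomness, implicit in the statement, which is what licenses the product-measure factorization; without it the conditioning argument would not go through.
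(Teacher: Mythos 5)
The paper itself does not prove this theorem: it is stated as a known result with a citation to McSherry (2009), so there is no in-paper argument to compare yours against. Judged on its own merits, the core of your proof is the standard and correct one for the approximate-DP version of parallel composition: factor the joint output law as a product measure, section the event $S$ on the coordinates other than $j$, apply the $(\epsilon_j,\delta_j)$ guarantee of $\M_j$ pointwise to each section (which is legitimate, since the guarantee holds for every measurable set, sections included), and integrate against the common marginal $\mu_{-j}$, which is identical for $D$ and $D'$ because the other blocks are unchanged. The Fubini bookkeeping you flag is indeed the only technical content, and it goes through: sections of measurable sets are measurable, and the section probability is a measurable function of the remaining coordinates. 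Your explicit identification of the independence assumption on the mechanisms' internal randomness is also a point the theorem statement leaves implicit.

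However, one claim in your first step is a genuine gap: that ``modifying, adding, or removing a single observation alters exactly one block.'' This is true for unbounded neighbors (adding or removing a record), but under the bounded (modification) notion --- which this paper defines explicitly and distinguishes throughout the same appendix --- a modification can change precisely the attribute by which the partition is defined, so the record migrates from block $j$ to block $k$: block $j$ loses a record and block $k$ gains one, and \emph{two} blocks change. Your argument must then invoke the guarantees of both $\M_j$ and $\M_k$, and by basic composition the bound degrades to $(\epsilon_j+\epsilon_k,\,\delta_j+\delta_k)$, i.e.\ up to $(2\max_i\epsilon_i,\,2\max_i\delta_i)$ rather than $(\max_i\epsilon_i,\,\max_i\delta_i)$. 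This is the same bounded-versus-unbounded phenomenon the paper illustrates with histogram sensitivity ($2$ versus $1$). To close the gap, either restrict the theorem to unbounded neighboring datasets, or add the hypothesis that the partitioning rule assigns the modified record to the same block before and after modification; as written, your proof silently assumes one of these.
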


The second is immunity to post-processing, which ensures that there is no manipulation (not relying on the data itself) that can be performed on the results of a differentially private mechanism to weaken the privacy guarantees.
\begin{theorem}[Immunity to post-processing] \textnormal{\citep{DPtextbook}}
    Let $\M$ be a randomized mechanism satisfying $(\epsilon,\delta)$-differential privacy. 
    $f\circ\M$ satisfies $(\epsilon,\delta)$-differential privacy for any arbitrary function $f$.
\end{theorem}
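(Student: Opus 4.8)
The plan is to reduce the post-processing claim to a single application of the $(\epsilon,\delta)$-DP guarantee already assumed for $\M$, exploiting that composing with $f$ only coarsens the output space and can never create new distinguishing power between neighboring datasets. I would first treat the case where $f$ is deterministic and then bootstrap to an arbitrary randomized $f$.

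For deterministic $f$, fix any measurable set $T\subseteq\textnormal{Range}(f\circ\M)$. The crucial observation is a preimage identity: the event $\{f(\M(D))\in T\}$ coincides exactly with the event $\{\M(D)\in S\}$, where $S := f^{-1}(T) = \{y : f(y)\in T\}\subseteq\textnormal{Range}(\M)$. Hence for any neighboring datasets $D_1,D_2$,
\begin{equation*}
    P(f(\M(D_1))\in T) = P(\M(D_1)\in S) \le e^\epsilon P(\M(D_2)\in S) + \delta = e^\epsilon P(f(\M(D_2))\in T) + \delta,
\end{equation*}
where the middle inequality is precisely the $(\epsilon,\delta)$-DP guarantee of $\M$ invoked on the set $S$. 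Since $T$ was arbitrary, this establishes $(\epsilon,\delta)$-DP for $f\circ\M$ in the deterministic case.

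To handle a randomized $f$, I would model it as a deterministic map consuming an independent source of randomness $r$, writing $f(y) = g(y,r)$ with $r$ drawn independently of the data. Conditioning on $r$, the map $g(\cdot,r)$ is deterministic, so the bound above holds for every fixed $r$. Because $r$ is independent of both $\M$'s internal randomness and the underlying data, taking expectation over $r$ preserves the inequality: the multiplicative factor $e^\epsilon$ and the additive term $\delta$ are constants that pass through the expectation by linearity. This upgrades the guarantee to the fully randomized post-processing.

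The main obstacle is measure-theoretic rather than conceptual. One must ensure the preimage $S = f^{-1}(T)$ is a legitimate measurable subset of $\textnormal{Range}(\M)$ so that the DP inequality may be applied to it, and, in the randomized case, that the joint law of $(\M\text{'s coins},\,r)$ factorizes so that the conditioning-and-averaging step is justified. Both requirements follow from standard assumptions—measurability of $f$ (equivalently $g$) and independence of the post-processing randomness from the data—so the heart of the argument remains the one-line preimage reduction, with the randomized extension being routine once independence is invoked.
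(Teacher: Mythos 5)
Your proof is correct, and it is essentially the canonical argument: the paper itself states this theorem without proof, deferring to the cited textbook, whose proof is exactly your two-step reduction (the preimage identity $\{f(\M(D))\in T\}=\{\M(D)\in f^{-1}(T)\}$ for deterministic $f$, then averaging over independent post-processing randomness, i.e., viewing a randomized $f$ as a convex combination of deterministic maps). Your measurability caveat is the right one to flag and is harmless under the standard assumption that $f$ is measurable, so nothing further is needed.
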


We conclude this section by defining the global sensitivity of statistics, which is used in some of the most general randomized mechanisms for achieving DP. 
Global sensitivity was originally defined using the $\ell_1$ norm by \citet{Dwork2006Calibrating}.
Here, we use a more general definition.
\begin{definition}[$\ell_p$-global sensitivity] \textnormal{\citep{Liu2019Generalized}}
    Let the distance between two datasets (denoted $d(D_1,D_2)$) be defined to be the number of observations that differ between the datasets.
    Note that $d(D_1,D_2)=1$ if $D_1$ and $D_2$ are neighboring datasets.
    The \textnormal{$\ell_p$-global sensitivity} of a function $f$ is defined to be
    \begin{equation}
        \Delta_{p,f} = \max_{\substack{D_1, D_2 \\ d(D_1,D_2) = 1}}||f(D_1) - f(D_2)||_p,
    \end{equation}
    where $||\cdot||_p$ is the $\ell_p$ norm.
\end{definition}

The global sensitivity may be different depending on if the bounded or the unbounded neighboring dataset definition is used.
For example, consider the function that outputs a histogram (a list of counts for each bin) from a given dataset.
If the bounded neighboring dataset definition is used, the $\ell_1$-global sensitivity of the histogram function is $2$ since modifying a dataset observation can at most change the count of two bins.
However, under the unbounded neighboring dataset definition, the $\ell_1$-global sensitivity of the histogram function is $1$ since adding or removing a dataset observation can at most change the count of one bin.
For functions where there may be a difference between the two definitions of neighboring datasets, \pkg{DPpack} allows the user to choose which one to use.

The global sensitivity sets a bound on the amount the statistics can change in the worst-case scenario between two neighboring data sets. 
The higher the sensitivity for a statistic is, the larger the amount of noise that will be injected to the original observed statistics to achieve the pre-specified level of privacy guarantees defined by $\epsilon$ and $\delta$.

For many statistics (e.g., mean, quantiles, regression coefficients), the global sensitivity is dependent on the global range of values that can occur in the dataset. 
In these cases, \pkg{DPpack} assumes the existence of known or reasonably inferred global or public bounds on the dataset, from which the global sensitivity is computed.

\section{DP Mechanisms}

There exist many general mechanisms for ensuring DP for a given analysis procedure or output. 
We introduce in this section three popular mechanisms: the Laplace mechanism \citep{Dwork2006Calibrating}, the Gaussian mechanism \citep{Dwork2006OurData}, and the exponential mechanism \citep{McSherry2007}.
We also provide examples of implementing these mechanisms using \pkg{DPpack}.

\subsection{Laplace Mechanism}

\begin{definition}[Laplace mechanism] \textnormal{\citep{Dwork2006Calibrating}}
    Let $D$ be a sensitive database. 
    Let $f$ be a given function with $\ell_1$-global sensitivity $\Delta_{1,f}$ and range $\mathbb{R}^n$.
    The \textnormal{Laplace mechanism} of $\epsilon$-differential privacy is defined to be
    \begin{equation}
        \M_L(D, f, \epsilon) = f(D) + \e,
    \end{equation}
    where $\e=(e_1,\ldots,e_n)^T$ and $e_i$ is drawn independently from distribution Lap($0,\Delta_{1,f}/\epsilon$).
\end{definition}

The \fn{LaplaceMechanism} function in \pkg{DPpack} implements the Laplace mechanism.
For a given scalar or a vector of observed statistic(s), the corresponding $\ell_1$-global sensitivity, and $\epsilon$, it releases a real number or numeric vector of values satisfying $\epsilon$-DP.
Global sensitivity calculated based either on bounded or unbounded neighboring datasets can be used. 

The following example uses the Laplace mechanism to release the sample mean with $\epsilon$-DP guarantees. 
Consider a sensitive dataset of $n=100$ observations with one attribute, the (public) global range of which is $[c_0,c_1] = [5,10]$. 
For the sample mean, the $\ell_1$-global sensitivity is the same for both bounded and unbounded DP and equals $(c_1-c_0)/n = 0.05$ \citep{Liu2019Statistical}.

\begin{example}
    library(DPpack)
    set.seed(42) # For reproducibility
    # Simulate a dataset
    n <- 100
    c0 <- 5
    c1 <- 10
    D <- runif(n, c0, c1)
    epsilon <- 1 # Privacy budget
    sensitivity <- (c1-c0)/n
    
    private.mean <- LaplaceMechanism(mean(D), epsilon, sensitivity)
    cat("Privacy preserving mean: ", private.mean, "\nTrue mean: ", mean(D))
    #> Privacy preserving mean:  7.636944 
    #> True mean:  7.622394
\end{example}

The \fn{LaplaceMechanism} function can also be used to release privacy-preserving multi-dimensional statistics which are the composition of scalar statistics, each with their own $\ell_1$ sensitivity. 
For example, let $\mathbf{f}(D) = (f_1(D),\ldots,f_n(D))$, where each $f_i$ has $\ell_1$-global sensitivity $\Delta_{1,f_i}$.
By default, the \fn{LaplaceMechanism} function sanitizes $\mathbf{f}$ by drawing Laplace noise from Lap($0,\Delta_{1,\mathbf{f}}/\epsilon)$, where $\Delta_{1,\mathbf{f}} = \sum_{i=1}^n\Delta_{1,f_i}$.
This approach corresponds to allocating a privacy budget of $\epsilon\Delta_{1,f_i}/\Delta_{1,\mathbf{f}}$ to sanitizing each scalar function $f_i$.
If desired, users may specify how to divide the total budget $\epsilon$ among the elements in $\mathbf{f}$ by passing a vector of proportions to the \code{alloc.proportions} argument instead of using the default allocation.
The following example demonstrates this functionality for the same situation as the previous example, but with an additional variance computation. 
The $\ell_1$-global sensitivity of the variance is also the same for both bounded and unbounded DP and equals $(c_1-c_0)^2/n = 0.25$ \citep{Liu2019Statistical}.

\begin{example}
    # Simulate a dataset
    n <- 100
    c0 <- 5
    c1 <- 10
    D <- runif(n, c0, c1)
    f <- function(D) c(mean(D), var(D))
    sensitivities <- c((c1-c0)/n, (c1-c0)^2/n)
    epsilon <- 1 # Total privacy budget for f
    
    # Here, privacy budget is split relative to the individual sensitivities
    # of the sample mean and sample variance. Collectively, the computation
    # satisfies 1-differential privacy.
    private.vals <- LaplaceMechanism(f(D), epsilon, sensitivities)
    cat("Privacy preserving values: ", private.vals, "\nTrue values: ", f(D))
    #> Privacy preserving values:  7.623156 2.401604
    #> True values:  7.61271 2.036525
    
    # Here, privacy budget is split so that 25% is given to the mean
    # and 75% is given to the variance
    private.vals <- LaplaceMechanism(f(D), epsilon, sensitivities,
                                     alloc.proportions = c(0.25, 0.75))
    cat("Privacy preserving values: ", private.vals, "\nTrue values: ", f(D))
    #> Privacy preserving values:  7.58841 1.652268
    #> True values:  7.61271 2.036525
\end{example}

\subsection{Gaussian Mechanism}

Another popular mechanism for DP implemented in \pkg{DPpack} is the Gaussian mechanism.
This mechanism can be used to provide either $(\epsilon,\delta)$ approximate DP \citep{Dwork2006OurData} or $(\epsilon,\delta)$ probabilistic DP \citep{Machanavajjhala2008}.
\begin{definition}[Gaussian mechanism] \textnormal{\citep{Dwork2006OurData}}
    Let $D$ be a sensitive database. 
    Let $f$ be a given function with $\ell_2$-global sensitivity $\Delta_{2,f}$ and range $\mathbb{R}^n$.
    The \textnormal{Gaussian mechanism} is defined to be
    \begin{equation}
        \M_G(D, f, \epsilon, \delta) = f(D) +  \e,
    \end{equation}
    where $\e=(e_1,\ldots,e_n)^T$ and $e_i$ is drawn independently  from $\mathcal{N}(0, \sigma^2)$. In the case that $\epsilon \in (0,1)$ and 
    \begin{equation}
        \sigma\ge c\Delta_{2,f}/\epsilon
    \end{equation} 
    for a constant $c$ such that $c^2>2\log(1.25/\delta)$, this mechanism was proven to satisfy approximate $(\epsilon,\delta)$-DP \citep{Dwork2006OurData}.
    Additionally, when
    \begin{equation}
        \sigma \ge (2\epsilon)^{-1}\Delta_{2,f}\bigg(\sqrt{(\Phi^{-1}(\delta/2))^2 + 2\epsilon} - \Phi^{-1}(\delta/2)\bigg),
    \end{equation}
    where $\Phi$ is the CDF of the standard normal distribution, this mechanism was proven to satisfy $(\epsilon, \delta)$ probabilistic DP \citep{Liu2019Generalized}.
\end{definition}

Note the requirement that $\epsilon<1$ for approximate DP, which is not required for the Gaussian mechanism to satisfy probabilistic DP. 
It is also worth highlighting that the Laplace mechanism requires $\ell_1$-sensitivity, while the Gaussian mechanism requires $\ell_2$-sensitivity. 
If $f$ is scalar-valued, $\Delta_{1,f} = \Delta_{2,f}$, but they are generally different for vector-valued $f$ except in some special cases.

The \code{GaussianMechanism} function in \pkg{DPpack} implements the Gaussian mechanism by adding Gaussian noise to a given scalar (or vector) of observed statistic(s) according to specified values of $\epsilon$, $\delta$, and $\ell_2$-global sensitivity.
It releases a scalar (or vector) satisfying either $(\epsilon,\delta)$ approximate DP if the \code{type.DP} argument is \code{'aDP'}, or $(\epsilon, \delta)$ probabilistic DP if the \code{type.DP} argument is \code{'pDP'}. 
Global sensitivity calculated based either on bounded or unbounded neighboring datasets can be used. 

We use the same example as for the Laplace mechanism to demonstrate the Gaussian mechanism for $(\epsilon, \delta)$ approximate DP and $(\epsilon, \delta)$ probabilistic DP for a sample mean.
Consider again a sensitive dataset of $n=100$ elements drawn uniformly from the range $[c_0,c_1] = [5,10]$.
Since the mean is a scalar in this case, the $\ell_2$-global sensitivity is equal to the $\ell_1$-global sensitivity, which is $(c_1 - c_0)/n = 0.05$ \citep{Liu2019Statistical}.

\begin{example}
    # Simulate a dataset
    n <- 100
    c0 <- 5
    c1 <- 10
    D <- runif(n, c0, c1)
    
    # Privacy budget
    epsilon <- 0.9 # eps must be in (0, 1) for approximate DP
    delta <- 0.01
    sensitivity <- (c1-c0)/n
    
    # Approximate differential privacy
    private.approx <- GaussianMechanism(mean(D), epsilon, 
                                        delta, sensitivity)
    cat("Privacy-preserving mean (approximate): ", private.approx, 
        "\nTrue mean: ", mean(D))
    #> Privacy preserving mean (approximate):  7.426412
    #> True mean:  7.170852
    
    # Probabilistic differential privacy
    private.prob <- GaussianMechanism(mean(D), epsilon, delta, 
                                      sensitivity, type.DP = 'pDP')
    cat("Privacy preserving mean (probabilistic): ", private.prob, 
        "\nTrue mean: ", mean(D))
    #> Privacy-preserving mean (probabilistic):  7.018747
    #> True mean:  7.170852
\end{example}

The \code{GaussianMechanism} function can also be used to release privacy-preserving multi-dimensional statistics analogously to the \code{LaplaceMechanism} function with only one difference.
If we again consider $\mathbf{f}(D) = (f_1(D),\ldots,f_n(D))$ to be the multi-dimensional statistics of interest, then $\Delta_{2,\mathbf{f}}$ for the Gaussian mechanism is computed as $\Delta_{2,\mathbf{f}} = \sqrt{\sum_{i=1}^n\Delta_{2,f_i}^2}$ by default.
If desired, users can specify their own privacy budget allocation (which applies to both $\epsilon$ and $\delta$) using the \code{alloc.proportions} argument.

\subsection{Exponential Mechanism}

The third privacy-preserving mechanism implemented in \pkg{DPpack} is the exponential mechanism, developed in \citet{McSherry2007}.
This mechanism is preferred for situations where it is not possible to inject numerical noise (such as when the function output is categorical) or not appropriate to add noise directly to the result of a given function or algorithm.
The exponential mechanism resolves this issue by assigning real-valued utilities to data/output pairs by specifying a utility function $u$.
An output is chosen and released with probability proportional to its corresponding utility.

\begin{definition}[Exponential mechanism] \textnormal{\citep{McSherry2007}}
    Let $D$ be a sensitive database, $f$ be a given function with range $\mathcal{R}$, and $u$ be a utility function mapping data/output pairs to $\mathbb{R}$ with $\ell_1$-global sensitivity $\Delta_{1,u}$.
    For output values $r \in \mathcal{R}$, the \textnormal{exponential mechanism} achieving $\epsilon$-DP is 
    \begin{equation}
        \M_E(D, u, \mathcal{R}, \epsilon) = r \textnormal{ with probability } \propto\exp{\left(\frac{\epsilon u(D,r)}{2\Delta_{1,u}}\right)}.
        \label{eq:exponential_mechanism}
    \end{equation}
\end{definition}

The \code{ExponentialMechanism} function in \pkg{DPpack} implements the exponential mechanism for differential privacy for a given sensitive dataset $D$ and for finite $\mathcal{R}$.
It takes as input a numeric vector \code{utility} representing the values of the utility function $u$ for each $r\in\mathcal{R}$, as well as a privacy budget $\epsilon$ and the $\ell_1$-global sensitivity of $u$.
It releases the index corresponding to the value $r\in\mathcal{R}$ randomly selected according to \eqref{eq:exponential_mechanism}.
Global sensitivity of $u$ calculated based either on bounded or unbounded neighboring datasets can be used. 

The \code{ExponentialMechanism} function also has two optional arguments: \code{measure} and \code{candidates}.
Each of these arguments, if provided, should be of the same length as \code{utility}.
If \code{measure} is given, the probabilities of selecting each value $r$ are weighted according to the numeric values in \code{measure} before the value $r$ is randomly chosen.
If \code{candidates} is provided, \code{ExponentialMechanism} returns the value in \code{candidates} at the randomly chosen index rather than the index itself.

We demonstrate the \code{ExponentialMechanism} function with a toy example.
Assume that a function $f$ has range $\mathcal{R} = \{$\code{`a'}, \code{`b'}, \code{`c'}, \code{`d'}, \code{`e'}$\}$. 
Numerical noise cannot be added directly to the output of $f$ due to the non-numeric nature of its range. 
Instead, we define a utility function $u$ that yields the following values when applied to the sensitive dataset $D$ and each element of $\mathcal{R}$, respectively: $(0, 1, 2, 1, 0)$.
Finally, assume the $\ell_1$-sensitivity of $u$ is 1.
We can use the \code{ExponentialMechanism} function to release an element of $\mathcal{R}$ as follows.

\begin{example}
    candidates <- c(`a', `b', `c', `d', `e') # Range of f
    # Utility function values in same order as corresponding candidates
    utility <- c(0, 1, 2, 1, 0)
    epsilon <- 1 # Privacy budget
    sensitivity <- 1
    
    # Release privacy-preserving index of chosen candidate
    idx <- ExponentialMechanism(utility, epsilon, sensitivity)
    candidates[idx]
    #> `b'
    
    # Release privacy-preserving candidate directly
    ExponentialMechanism(utility, epsilon, sensitivity, 
                         candidates = candidates)
    #> `a'
\end{example}

% ====================================================
\section{Implementation of DP Descriptive Statistics}

Descriptive statistics are popular and effective ways to summarize data.
However, if these statistics are computed from a sensitive dataset and released directly, they could be susceptible to attacks that reveal private information about the individuals in the data, even if the dataset itself is not breached.
Many of these statistics can be made differentially private through the application of one or more of the mechanisms discussed in the previous section.
For ease of use, \pkg{DPpack} implements privacy-preserving versions of many descriptive statistics directly, utilizing the previously defined mechanisms under the hood.

\subsection{Mean, Standard Deviation, Variance, and Covariance}

The \code{meanDP},  \code{sdDP}, and \code{varDP}, functions can be used to release differentially private means, standard deviations, and variances respectively, calculated from a sensitive dataset. 
These functions all share the same set of arguments: a dataset \code{x}, a privacy budget \code{eps} (and possibly \code{delta}), as well as bounds on the attributes in the dataset \code{lower.bound} and \code{upper.bound}.
Any values of \code{x} that happen to fall outside the bounds are clipped to the bounds before the mean is computed.
These bounds are used to compute the global sensitivity of the desired statistic function based on proven values \citep{Liu2019Statistical}.

By default, each function releases sanitized values satisfying \code{eps}-DP via the Laplace mechanism. 
The \code{mechanism} argument defaults to \code{`Laplace'}, indicating to use the Laplace mechanism.
However, the output can be changed by modifying the value of some additional arguments and setting \code{mechanism} to \code{`Gaussian'}.
In this case, the \code{delta} argument must be positive.
The \code{type.DP} argument can be either \code{`aDP'} (default) or \code{`pDP'} for satisfying $($\code{eps}, \code{delta}$)$ approximate DP and $($\code{eps}, \code{delta}$)$ probabilistic DP, respectively, and indicates the type of DP provided when the Gaussian mechanism is used.
The \code{which.sensitivity} argument can be one of \code{`bounded'} (default), \code{`unbounded'}, or \code{`both'}, indicating whether to release results satisfying bounded and/or unbounded DP.
The following example demonstrates how these functions can be used.

\begin{example}
    # Simulate a dataset
    D <- rnorm(500, mean=3, sd=2)
    lower.bound <- -3 # 3 standard deviations below mean
    upper.bound <- 9 # 3 standard deviations above mean
    
    # Get mean satisfying bounded 1-differential privacy
    private.mean <- meanDP(D, 1, lower.bound, upper.bound)
    cat("Privacy preserving mean: ", private.mean, "\nTrue mean: ", mean(D))
    #> Privacy preserving mean:  2.872637
    #> True mean:  2.857334
    
    # Get variance satisfying unbounded approximate (0.5, 0.01)-DP
    private.var <- varDP(D, 0.5, lower.bound, upper.bound,
                         which.sensitivity = `unbounded', 
                         mechanism = `Gaussian', delta = 0.01)
    cat("Privacy preserving variance: ", private.var,
    "\nTrue variance: ", var(D))
    #> Privacy preserving variance:  3.276551
    #> True variance:  4.380399
    
    # Get std dev satisfying bounded probabilistic (0.5, 0.01)-DP
    private.sd <- sdDP(D, 0.5, lower.bound, upper.bound, 
                       mechanism=`Gaussian', delta=0.01, type.DP=`pDP')
    cat("Privacy preserving standard deviation: ", private.sd,
        "\nTrue standard deviation: ", sd(D))
    #> Privacy preserving standard deviation:  1.978296
    #> True standard deviation:  2.09294
\end{example}

The \code{pooledVarDP} function in \pkg{DPpack} can be used to compute a differentially private pooled variance for multiple groups of data.
The inputs are similar to those of \code{meanDP}, \code{varDP}, and \code{sdDP} with a few differences.
First, the function accepts multiple numeric vectors representing different data groups, rather than a single dataset \code{x}.
The function uses provided lower and upper bounds on the entire collection of data to compute the sensitivity of the function based on the derived formulas in \citet{Liu2019Statistical}, then releases a privacy preserving pooled variance of the entire collection of data based on provided privacy budget parameters.
The formulas to compute the function's sensitivity require a value $n_{\textnormal{max}}$ representing the size of the largest provided dataset vector.
If the value itself is sensitive, it can be approximated by setting the \code{approx.n.max} argument to \code{TRUE}.
The following examples demonstrate this function's use.

\begin{example}
    # Simulate three datasets from the same distribution
    D1 <- rnorm(500, mean=3, sd=2)
    D2 <- rnorm(200, mean=3, sd=2)
    D3 <- rnorm(100, mean=3, sd=2)
    lower.bound <- -3 # 3 standard deviations below mean
    upper.bound <- 9 # 3 standard deviations above mean
    
    # Get private pooled variance without approximating n.max
    private.pooled.var <- pooledVarDP(D1, D2, D3, eps=1,
                                      lower.bound = lower.bound,
                                      upper.bound = upper.bound)
    cat("Privacy preserving pooled variance: ", private.pooled.var,
        "\nTrue pooled variance: ", var(c(D1, D2, D3)))
    #> Privacy preserving pooled variance:  3.682308 
    #> True pooled variance:  3.931237
    
    # If n.max is sensitive, we can also use
    private.pooled.var <- pooledVarDP(D1, D2, D3, eps=1, 
                                      lower.bound = lower.bound,
                                      upper.bound = upper.bound, 
                                      approx.n.max = FALSE)
\end{example}

\pkg{DPpack} also implements functions for privacy-preserving covariance and pooled covariance: \code{covDP} and \code{pooledCovDP}, which have similar arguments to the previously described functions.
The \code{covDP} function accepts two numeric vector datasets \code{x1} and \code{x2}, as well as upper and lower bounds on each of these two datasets individually. 
The function then returns the sanitized covariance between \code{x1} and \code{x2}, based on provided privacy budget values and sensitivity computed using the bounds via the proven formula from \citet{Liu2019Statistical}.
The \code{pooledCovDP} function accepts any number of matrices.
These matrices can have a variable number of rows, but must each have two columns.
Two sets of bounds for the entire collection of data from each column must also be provided.
The function releases a sanitized pooled covariance between the columns of the provided matrices based on the privacy budget, bounds, and the sensitivity computed according to the formula from \citet{Liu2019Statistical}.
Finally, \code{pooledCovDP} utilizes the value $n_{\textnormal{max}}$ in the computation of the sensitivity similar to the \code{pooledVarDP} function, so the \code{approx.n.max} argument is also present in this function and indicates the same thing.
The following examples show the use of both of these functions.

\begin{example}
    # Simulate datasets
    D1 <- sort(rnorm(500, mean=3, sd=2))
    D2 <- sort(rnorm(500, mean=-1, sd=0.5))
    lb1 <- -3 # 3 std devs below mean
    lb2 <- -2.5 # 3 std devs below mean
    ub1 <- 9 # 3 std devs above mean
    ub2 <- .5 # 3 std devs above mean
    
    # Covariance satisfying 1-differential privacy
    private.cov <- covDP(D1, D2, 1, lb1, ub1, lb2, ub2)
    cat("Privacy preserving covariance: ", private.cov,
        "\nTrue covariance: ", cov(D1, D2))
    #> Privacy preserving covariance:  0.9598711 
    #> True covariance:  0.9908612
    
    # We can also find a sanitized pooled covariance with additional datasets
    D3 <- sort(rnorm(200, mean=3, sd=2))
    D4 <- sort(rnorm(200, mean=-1, sd=0.5))
    M1 <- matrix(c(D1, D2), ncol=2)
    M2 <- matrix(c(D3, D4), ncol=2)
    
    # Pooled covariance satisfying (1,0)-differential privacy
    private.pooled.cov <- pooledCovDP(M1, M2, eps = 1, lower.bound1 = lb1,
                                    lower.bound2 = lb2, upper.bound1 = ub1,
                                    upper.bound2 = ub2)
\end{example}

\subsection{Counting Functions}

\pkg{DPpack} supports differentially private histograms and contingency tables via the functions \code{histogramDP} and \code{tableDP}, respectively. 
The functions release privacy-preserving results based on given sensitive input data (in the same form required by the standard \code{hist} and \code{table} functions) and privacy budget parameters.
Bounds on the dataset are not necessary as the global sensitivity for both functions is a constant independent of the data.
As with many of the previously described functions, the guaranteed DP for both of these functions can be bounded or unbounded, as well as pure, approximate, or probabilistic depending on the values given for the \code{which.sensitivity}, \code{mechanism}, and \code{type.DP} arguments.
Due to noise added to the typical output by both the Laplace and Gaussian mechanisms, it is possible that some counts obtained directly from the chosen mechanism are negative.
By default, both of these functions coerce any such values to 0.
However, if in a particular application it is preferred that negative counts be allowed, this can be done by setting the \code{allow.negative} argument to \code{TRUE}.
The \code{histogramDP} function has two additional arguments: \code{breaks} and \code{normalize}.
The \code{breaks} argument is equivalent to the argument with the same name in the standard \code{hist} function, while the \code{normalize} argument indicates whether the outputs should correspond to frequencies (if set to \code{FALSE}) or if they should be normalized so that the total area under the histogram is 1 (if set to \code{TRUE}).

The following examples demonstrate the proper use of the \code{histogramDP} and \code{tableDP} functions.
Note that \code{histogramDP} returns an object similar to that returned by the standard \code{hist} function, but does not plot the histogram by default. 
Plotting the result is as easy as calling the \code{plot} function on the object released from \code{histogramDP}.
The results are shown in Figure \ref{fig:histogram example}.
\begin{example}
    x <- rnorm(500) # Simulate dataset
    hist(x, main = "Non-private histogram", ylim=c(0, 110), col="gray")
    private.hist <- histogramDP(x, 1) # Satisfies (1,0)-DP
    plot(private.hist, main = "Private histogram", 
         ylim=c(0, 110), col="gray")
\end{example}

\begin{figure}[!htb]
\centering
\begin{subfigure}[b]{0.49\textwidth}
\centering
\includegraphics[width=\textwidth, trim=3pt 18pt 12pt 12pt, clip]{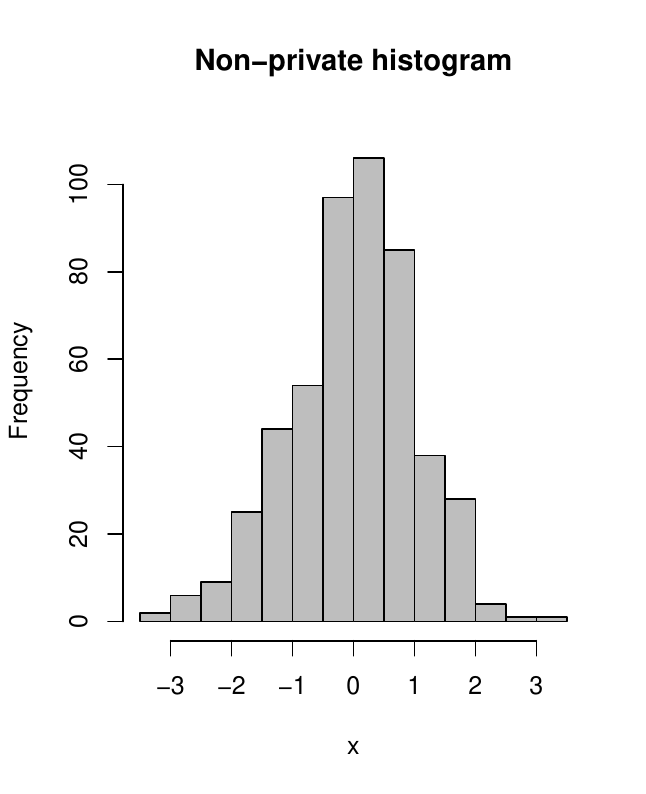}
\label{fig:non-private histogram}
\end{subfigure}
\begin{subfigure}[b]{0.49\textwidth}
\centering
\includegraphics[width=\textwidth, trim=3pt 18pt 12pt 12pt, clip]{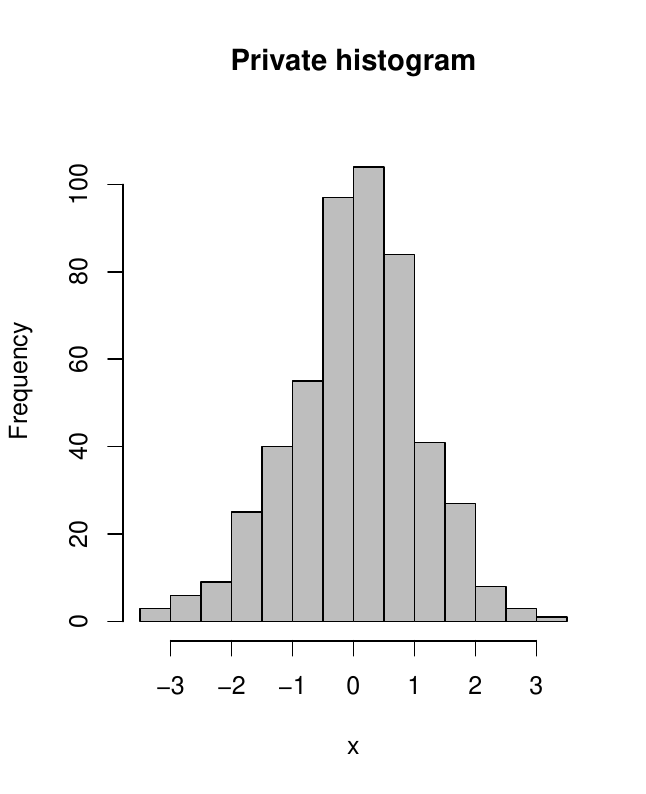}
\label{fig:private histogram}
\end{subfigure}\vspace{-9pt}
\caption{Original and privacy-preserving histograms from the histogram example.}
\label{fig:histogram example}
\end{figure}

We use a subset of variables from the \code{Cars93} dataset in the \pkg{MASS} R package to demonstrate the generation of privacy-preserving contingency table. 
The results are shown in Table \ref{tab:contingency_table_example}.
\begin{example}
    x <- MASS::Cars93$Type
    y <- MASS::Cars93$Origin
    z <- MASS::Cars93$AirBags
    table(x, y, z) # Non-private contingency table
    tableDP(x, y, z, eps=1) # Private contingency table
\end{example}

\begin{table}[!htb]
\centering
\caption{The outputs from the contingency table example.
The privacy-preserving cell counts are listed with the original (non-private) values in parentheses.}
\label{tab:contingency_table_example}
\resizebox{0.85\textwidth}{!}{
\begin{tabular}{l l c c c c c c}
    \hline
    Airbag& Origin &\multicolumn{6}{c}{Type}\\
    \cline{3-8}
    && Compact & Large & Midsize & Small & Sporty & Van\\
    \hline
    Driver & USA     & 0 (1) & 4 (4) & 7 (2) & 0 (0) & 1 (2) & 0 (0) \\
     \& Passenger   & non-USA & 1 (1) & 0 (0) & 5 (5) & 0 (0) & 1 (1) & 0 (0) \\
    \hline
    Driver only         & USA     & 0 (2) & 5 (7) & 6 (5) & 2 (2) & 11 (5) & 3 (2) \\
                        & non-USA & 7 (7) & 4 (0) & 4 (6) & 2 (3) & 3 (3) & 1 (1) \\
    \hline
    None                & USA     & 4 (4) & 0 (0) & 6 (3) & 4 (5) & 0 (1) & 2 (3) \\
                        & non-USA & 0 (1) & 1 (0) & 3 (1) & 16 (11) & 2 (2) & 7 (3) \\
    \hline
\end{tabular}}
\end{table}

\subsection{Quantiles}

\pkg{DPpack} also implements differentially private quantiles and medians using the \code{quantileDP} and \code{medianDP} functions, respectively.
The \code{quantileDP} function accepts a sensitive dataset as a numeric vector, a real number between 0 and 1 indicating the desired quantile, a single privacy budget parameter \code{eps}, and global bounds on the values in the dataset.
It implements the private quantile algorithm from \citet{Smith2011}, which defines a utility function, and utilizes the exponential mechanism to release a quantile satisfying $\epsilon$-DP based on the proven $\ell_1$-global sensitivity of the utility function \citep{Smith2011, Gillenwater2021}. 
The algorithm from \citet{Smith2011} used in \code{quantileDP} uses the exponential mechanism to select a specific dataset value from the given dataset but releases a value drawn uniformly from the interval between the selected value and the subsequent value in ascending order.
This means that the released value may not necessarily be a value present in the original dataset.
If this behavior is not desirable for a certain application, the \code{uniform.sampling} argument can be set to \code{FALSE}, in which case the function releases the result of the exponential mechanism step directly without the uniform sampling step.
The \code{medianDP} function is present in \pkg{DPpack} for convenience, and works identically to the \code{quantileDP} function with the quantile argument set to 0.5. 

Both functions accept two additional arguments.
The \code{which.sensitivity} argument operates analogously to the identically named argument in the other functions described in this section.
The \code{mechanism} argument indicates which mechanism should be used to satisfy DP when running the function.
Currently, only the exponential mechanism (the default for this argument) is supported for \code{quantiileDP}, but this argument was still included for symmetry with the other descriptive statistic functions, as well as for robustness in future versions of \code{DPpack}.
The following examples show the use of both of these functions.

\begin{example}
    # Simulate a dataset
    D <- rnorm(500)
    lower.bound <- -3 # 3 standard deviations below mean
    upper.bound <- 3 # 3 standard deviations above mean
    
    quant <- 0.25
    eps <- 1
    # Get 25th quantile satisfying 1-differential privacy
    private.quantile <- quantileDP(D, quant, eps, lower.bound, upper.bound)
    cat("Privacy preserving quantile: ", private.quantile,
        "\nTrue quantile: ", quantile(D, 0.25))
    #> Privacy preserving quantile:  -0.7768781 
    #> True quantile:  -0.7685687
    
    # Get median requiring released value to be in dataset
    private.median <- medianDP(c(1,0,3,3,2), eps, lower.bound = 0, 
                               upper.bound = 4,
                               uniform.sampling = FALSE)
    cat("Privacy preserving median: ", private.median,
        "\nTrue median: ", median(c(1,0,3,3,2)))
    #> Privacy preserving median:  1
    #> True median:  2
\end{example}

% =========================================================
\section{Implementation of DP Statistical and ML Methods}

\pkg{DPpack} implements privacy-preserving versions of some commonly used  classification and regression models.
Many such models can be formulated as empirical risk minimization (ERM) problems, which have been generally shown to have privacy-preserving counterparts under certain assumptions \citep{chaudhuri2011, Kifer2012}.
This section first provides a brief introduction to differentially private ERM algorithms and their necessary assumptions, then discusses the specific implementation in \pkg{DPpack} of logistic regression, support vector machines (SVM) and their extension to outcome weighted learning (OWL), and linear regression.

Each of the ERM-based methods implemented in \pkg{DPpack} requires the selection of various hyperparameter values that can impact model performance.
A variety of techniques exist to tune these parameters, but many of these techniques threaten to leak private database information themselves.
Thus, privacy-preserving hyperparameter tuning methods for both the classification and regression models are implemented in \pkg{DPpack}.
These are also described in this section.

\subsection{Empirical Risk Minimization}

Assume that we have a set of $n$ input-output pairs $(\mathbf{x}_i, y_i) \in (\mathcal{X},\mathcal{Y})$ representing a sensitive training dataset $\D$.
Additionally, define $\ell: \mathcal{Y} \times \mathcal{Y} \rightarrow \mathbb{R}$ to be a loss function over pairs of values from the output space.
In general, ERM attempts to produce an effective predictor function $f:\mathcal{X} \rightarrow \mathcal{Y}$ by minimizing the empirical risk
\begin{equation}
    \frac{1}{n}\sum_{i=1}^n\ell(f_{\boldsymbol{\theta}}(\mathbf{x}_i), y_i) = \frac{1}{n}\sum_{i=1}^n\ell_i(\boldsymbol{\theta}).
\end{equation}
For the algorithms implemented in \pkg{DPpack}, we assume there exists a one-to-one mapping from a $p$-dimensional vector $\boldsymbol{\theta}$ to $f$, where $p$ is the length of $\mathbf{x}_i$ (i.e. the number of predictors).
In order to mitigate overfitting, it is also common to introduce a regularizer function $R$. 
This produces the regularized ERM model
\begin{equation}\label{eq:regularized_ERM}
    \frac{1}{n}\sum_{i=1}^n \ell(f_{\boldsymbol{\theta}}(\mathbf{x}_i), y_i) + \frac{\gamma}{n}R(\boldsymbol{\theta}) = \ell_i(\boldsymbol{\theta}) + \frac{\gamma}{n}R(\boldsymbol{\theta}),
\end{equation}
where $\ell_i(\boldsymbol{\theta}) \!=\! \ell(f_{\boldsymbol{\theta}}(\mathbf{x}_i), y_i)$ and $\gamma$ is a tunable hyperparameter known as the regularization constant.

For binary classification problems, \citet{chaudhuri2011} proved $\epsilon$-DP can be satisfied for regularized ERM by two different algorithms if certain assumptions are met. 
The first algorithm is an \emph{output} perturbation method, and the second is an \emph{objective} perturbation method. 
We briefly mention the assumptions here and refer the interested reader to \citet{chaudhuri2011} for more information and for the proofs.
Both algorithms assume $\norm{\mathbf{x}_i}_2\le 1$ for all $i$, that the regularizer $R$ is differentiable and 1-strongly convex, and that the loss function $\ell$ is differentiable and convex with $\big|\frac{\partial}{\partial f}\ell(f, y)\big|\le 1$ for all $f$ and $y$. 
The objective perturbation algorithm has additional assumptions that $R$ and $\ell$ are doubly differentiable and that $\big|\frac{\partial^2}{\partial f^2}\ell(f, y)\big|\le c$ for some constant $c$.

The output perturbation method first solves Eqn \eqref{eq:regularized_ERM} and then perturbs the output $\hat{\boldsymbol{\theta}}$ by adding noise determined by the values of $n$, $\epsilon$, and $\gamma$.
The objective function perturbation method adds random noise determined by $n$, $\epsilon$, $\gamma$, and $c$ directly to the objective function, then finds $\hat{\boldsymbol\theta}$ minimizing the perturbed function.
This amounts to the privacy-preserving regularized ERM model
\begin{equation}\label{eq:private_regularized_ERM}
    \frac{1}{n}\sum_{i=1}^n \ell_i(\boldsymbol\theta) + \frac{\gamma}{n}R(\boldsymbol\theta) + \frac{\Delta}{2n}||\boldsymbol\theta||^2_2 + \frac{\mathbf{b}^T\boldsymbol\theta}{n},
\end{equation}
where $\mathbf{b}$ is the injected random noise and $\frac{\Delta}{2n}\norm{\boldsymbol\theta}^2_2$ is an additional slack term necessary for DP via objective perturbation.
\citet{chaudhuri2011} show that objective perturbation generally provides better utility guarantees than output perturbation for the same privacy budget.

\pkg{DPpack} implements both algorithms using the \code{EmpiricalRiskMinimizationDP.CMS} \pkg{R6} class.
This class provides a general framework for running these algorithms, but is not intended to be utilized directly.
Rather, it should be used as the parent class in an inheritance structure where the child class implements a specific realization of ERM for binary classification (i.e. logistic regression).
Examples of this will be discussed in the subsequent sections.

For regression problems, \citet{Kifer2012} proposed a slightly different algorithm that satisfies DP for regularized ERM in Eqn \eqref{eq:private_regularized_ERM}.
Assume that $\ell_i(\boldsymbol\theta)$ is convex with a continuous Hessian, $R$ is convex, and the following conditions hold for all $\mathbf{x}_i, y_i$ and for all $\boldsymbol\theta\in\mathbb{F}$ ($\mathbb{F}$ is a closed convex subset of $\mathbb{R}^p$): $\norm{\nabla_{\boldsymbol\theta} \ell_i(\boldsymbol\theta)}_2 \le \zeta$ for some constant $\zeta$, the eigenvalues of $\nabla^2_{\boldsymbol\theta}\ell_i(\boldsymbol\theta)$ are bounded above by some constant $\lambda$, and the rank of $\nabla^2_{\boldsymbol\theta}\ell_i(\boldsymbol\theta)$ is at most one. 
Then the solutions $\hat{\boldsymbol\theta}\in\mathbb{F}$ from minimizing the perturbed objective Eqn \eqref{eq:private_regularized_ERM} satisfy DP\footnote{Though the objective perturbation algorithms from \citet{chaudhuri2011} and \citet{Kifer2012} can both be written in the general form of Eqn \eqref{eq:private_regularized_ERM}, it is worth emphasizing that the former requires $R$ to be 1-strongly convex, while the latter only requires $R$ to be convex.
The popular $\ell_1$ regularizer is an example of a convex, but not 1-strongly convex regularizer.}. The algorithm can be used to satisfy either $\epsilon$-DP or approximate $(\epsilon, \delta)$-DP.
If $\epsilon$-DP is desired, the noise vector is drawn from a Gamma distribution depending on the values of $\epsilon$ and $\zeta$, while if approximate DP is desired, the noise vector is drawn from a Gaussian distribution depending on the values of $\epsilon$, $\delta$, and $\zeta$.
We emphasize that for this algorithm, the resulting value $\hat{\boldsymbol\theta}$ is restricted to the set $\mathbb{F}$.

\pkg{DPpack} implements this algorithm using the \code{EmpiricalRiskMinimizationDP.KST} \pkg{R6} class.
Similar to the \code{EmpiricalRiskMinimizationDP.CMS} class, this class provides a general framework for using the algorithm, but is not intended to be used directly.
Child classes inheriting from this class and implementing a specific realization of the ERM for regression algorithm should be used instead.
\pkg{DPpack} implements linear regression in this way, which will be discussed in the subsequent section on regression methods.

\subsection{Logistic Regression}

The two algorithms described in the previous section for privacy preserving ERM for binary classification can be applied to logistic regression.
The loss function given a single observation is the cross entropy loss (or the negative log-likelihood)
\begin{equation}
    \ell_i(\boldsymbol\theta) = -(y_i\log(f_{\boldsymbol\theta}(\mathbf{x}_i)) + (1-y_i)\log(1-f_{\boldsymbol\theta}(\mathbf{x}_i))),
\end{equation}
where $f_{\boldsymbol\theta}(\mathbf{x}_i) = \left(1+e^{-\mathbf{x}_i\boldsymbol\theta}\right)^{-1}$ is the predicted value of $y$.
The regularized objective function given data $\mathcal{D} = (\mathbf{x},\mathbf{y})$ is
\begin{equation}\label{eqn:loss_logsistic}
    %\mathcal{L}(\boldsymbol\theta, \mathcal{D}) + \frac{\gamma}{n}R(\boldsymbol\theta) = 
    \frac{1}{n}\sum_{i=1}^n \left(y_i\log(1+e^{-\mathbf{x}_i\boldsymbol\theta}) + (1-y_i)\log(1+e^{\mathbf{x}_i\boldsymbol\theta})\right) + \frac{\gamma}{n}R(\boldsymbol\theta).
\end{equation}
The loss function in Eqn \eqref{eqn:loss_logsistic} meets all of the regularity conditions necessary for both the output perturbation and the objective perturbation algorithms to satisfy DP\footnote{with $c = 1/4$ for the objective perturbation algorithm \citep{chaudhuri2011}.
Also noted is that differentially private logistic regression was first proved outside of the ERM setting in \citet{Chaudhuri2009}.}. 

\pkg{DPpack} uses the \code{LogisticRegressionDP} \pkg{R6} class to implement differentially private logistic regression in three steps using the \code{EmpiricalRiskMinimizationDP.CMS} framework that is based on the algorithm from \citet{chaudhuri2011}
\footnote{Logistic regression can also be implemented using the algorithm from \citet{Kifer2012} implemented in \code{EmpiricalRiskMinimizationDP.KST}.
While \pkg{DPpack} does not currently provide such an application in its current version, users are able to develop their own using the inheritance structure if desired.}.

The first step is to construct a \code{LogisticRegressionDP} object.
The constructor for this class accepts a callable function \code{regularizer} for the regularizer function, a privacy budget parameter \code{eps}, a regularization constant \code{gamma}, and a string \code{perturbation.method} indicating whether to use the output or the objective perturbation algorithm.
If the argument \code{perturbation.method} is set to \code{`output'}, the output perturbation algorithm is run. 
The user must ensure in this case that the regularizer meets the necessary requirements, namely that it is differentiable and 1-strongly convex.
If \code{perturbation.method} is set to \code{`objective'}, the objective perturbation algorithm is run.
In this case, the user must ensure that the regularizer is doubly differentiable and 1-strongly convex.
One popular regularization function is the $\ell_2$ regularizer $R(\boldsymbol\theta) = \frac{1}{2}\norm{\boldsymbol\theta}_2^2$.
For convenience, this regularization function (and its gradient) can be used by simply setting \code{regularizer} to \code{`l2'}.
An optional callable function \code{regularizer.gr} representing the gradient of the regularizer can also be provided.

After constructing a \code{LogisticRegressionDP} object, the second step is to train the model with a dataset.
To do this, the user should call the \code{\$fit} method of the constructed object.
This method accepts as arguments a sensitive dataset \code{X} and corresponding sensitive labels for each row \code{y}.
It also accepts numeric vectors giving the global or public bounds on the data in each column of \code{X}.
There are several points to note regarding \code{\$fit}.
First, the method assumes that the binary labels provided by \code{y} are either 0 or 1.
Second, both the output and objective perturbation algorithms assume that for each row $\mathbf{x}_i$ of the input dataset we have $\norm{\mathbf{x}_i}_2\le 1$.
Given that this requirement is not met by most practical datasets, to allow for more realistic datasets to train the model, the \code{\$fit} method utilizes the provided upper and lower bounds on the columns of \code{X} to pre-process and scale the values of \code{X} in such a way that this constraint is met.
The privacy-preserving algorithm is then run, producing differentially private coefficients for the scaled dataset.
After the private coefficients are generated, these are then post-processed and un-scaled before being stored as the object attribute \code{\$coeff}, so that the stored coefficients correspond to the original data. 
Because both the pre-processing and the post-processing steps rely solely on the global or public bounds, DP is maintained by the post-processing theorem.

Specifically, \code{X} is pre-processed as follows. 
First, the largest in absolute value of the upper and lower bounds on each column are used to scale each column individually such that the largest value in each column is at most 1 in absolute value. 
Second, each value in \code{X} is divided by $\sqrt{p}$, the square root of the number of predictors of \code{X}. 
These two scalings ensure that each row of \code{X} satisfies the necessary constraints for DP.
After training, the post-processing of the private coefficients is then accomplished by dividing each element of the trained vector by the same value used to scale the corresponding column individually in the pre-processing step, then dividing the entire vector by  $\sqrt{p}$.

The original privacy-preserving ERM algorithms assume there is no bias term present in the predictor function.
If a bias term is necessary, this issue can be partially circumvented by prepending a column of 1s to \code{X} before fitting the model.
In this case, the first element of the fitted vector \code{\$coeff} is essentially the bias term.
The \code{\$fit} method does this when the \code{add.bias} argument is set to \code{TRUE}.
We caution that adding a column of 1s to \code{X} results in an additional column that must be scaled in the pre-processing step, and we recommend not using a bias term if at all possible.

After training the model, the third and final step is to release the trained coefficients or to use them to predict the labels of new datapoints.
The privacy-preserving coefficients are stored in the attribute \code{\$coeff}, which can be directly released without violating privacy guarantees.
Alternatively, the \code{\$predict} method can be used.
This method accepts a set of data \code{X} of the same form (i.e. dimensions, variable order, etc.) as the one provided to the \code{\$fit} method, as well as boolean \code{add.bias} and boolean \code{raw.value} arguments.
The method then returns a matrix of predicted values corresponding to each row of \code{X} based on the logistic regression predictor function $f_{\boldsymbol\theta}$ and the trained and stored coefficients \code{\$coeff}.
The \code{add.bias} argument should be set to the same value as the identically named argument was when the \code{\$fit} method was called.
The \code{raw.value} argument is used to indicate whether the returned matrix should consist of the raw scores from the logistic regression predictor function (i.e. real numbers between 0 and 1), or whether it should consist of predicted labels for the rows (i.e. 0 or 1 values) obtained by rounding the scores.

The following example shows the usage of the \code{LogisticRegressionDP} class on a 2-dimensional toy dataset.
\begin{example}
    # Simulate train dataset X and y, and test dataset Xtest and ytest
    N <- 200
    K <- 2
    X <- data.frame()
    y <- data.frame()
    for (j in (1:K)){
      t <- seq(-.25, .25, length.out = N)
      if (j==1) m <- rnorm(N,-.2, .1)
      if (j==2) m <- rnorm(N, .2, .1)
      Xtemp <- data.frame(x1 = 3*t , x2 = m - t)
      ytemp <- data.frame(matrix(j-1, N, 1))
      X <- rbind(X, Xtemp)
      y <- rbind(y, ytemp)
    }
    # Bounds for X based on construction
    upper.bounds <- c( 1, 1)
    lower.bounds <- c(-1,-1)
    
    # Train-test split
    Xtest <- X[seq(1,(N*K),10),]
    ytest <- y[seq(1,(N*K),10),,drop=FALSE]
    X <- X[-seq(1,(N*K),10),]
    y <- y[-seq(1,(N*K),10),,drop=FALSE]
    
    # Construct object for logistic regression
    regularizer <- function(coeff) coeff%*%coeff/2
    regularizer.gr <- function(coeff) coeff
    eps <- 1
    gamma <- 0.1
    lrdp <- LogisticRegressionDP$new(regularizer, eps, gamma,
                                     regularizer.gr = regularizer.gr)
    
    # Fit with data
    lrdp$fit(X, y, upper.bounds, lower.bounds) # No bias term
    lrdp$coeff # Gets private coefficients
    #> 1.449110 5.562798
    
    # Predict new data points
    predicted.y <- lrdp$predict(Xtest)
    n.errors <- sum(predicted.y!=ytest)
\end{example}

\subsection{Support Vector Machine (SVM)}

The privacy-preserving binary classification ERM algorithms can also be applied to linear and nonlinear SVM. 
For notational simplicity, we let $\{-1, 1\}$  be the binary labels for $y$ when defining loss functions in SVM; for the implementation, for consistency with the \code{LogisticRegressionDP} class, we require $y$ in the input dataset to be coded in $\{0, 1\}$.

For linear SVM, the loss function given a single observation is the hinge loss
\begin{equation}
    \ell_i(\boldsymbol{\theta}) = \max(0, 1-y_if_{\boldsymbol\theta}(\mathbf{x}_i)),
\end{equation}
where $f_{\boldsymbol\theta}(\mathbf{x}_i) = \mathbf{x}_i\boldsymbol\theta$ is the predicted value of $y$.
The regularized objective function given data $\mathcal{D} = (\mathbf{x}, \mathbf{y})$ is
\begin{equation}
    %\mathcal{L}(\boldsymbol\theta, \D) + \frac{\gamma}{n}R(\boldsymbol\theta) = 
    \frac{1}{n}\sum_{i=1}^n \max(0,1-y_i\mathbf{x}_i\boldsymbol\theta) + \frac{\gamma}{n}R(\boldsymbol\theta).
\end{equation}
Unfortunately, the hinge loss is not differentiable everywhere and therefore does not satisfy the requirements for privacy-preserving ERM.
One solution to this (used by \pkg{DPpack}) is to use the smooth Huber loss approximation to the hinge loss \citep{Chapelle2007} defined by
\begin{equation}\label{eqn:hinge}
    \ell_\textrm{Huber}(z) = \begin{cases} 
      0, & \textrm{if} \;\; z > 1 + h \\
      \frac{1}{4h}(1+h-z)^2, & \textrm{if} \;\; |1-z| \le h \\
      1 - z, & \textrm{if} \;\; z < 1 - h
   \end{cases}
\end{equation}
for a given Huber loss parameter $h$.
Figure \ref{fig:huber_loss} shows a comparison between the Huber loss and the hinge loss for various values of $h$.
For linear SVM, the described predictor function and the Huber loss meet all of the requirements necessary for both the output perturbation and the objective perturbation algorithms for DP\footnote{with $c = 1/2h$ for the objective perturbation algorithm \citep{chaudhuri2011}}.

\begin{figure}[!htb]
    \centering
    \includegraphics[width=.5\textwidth]{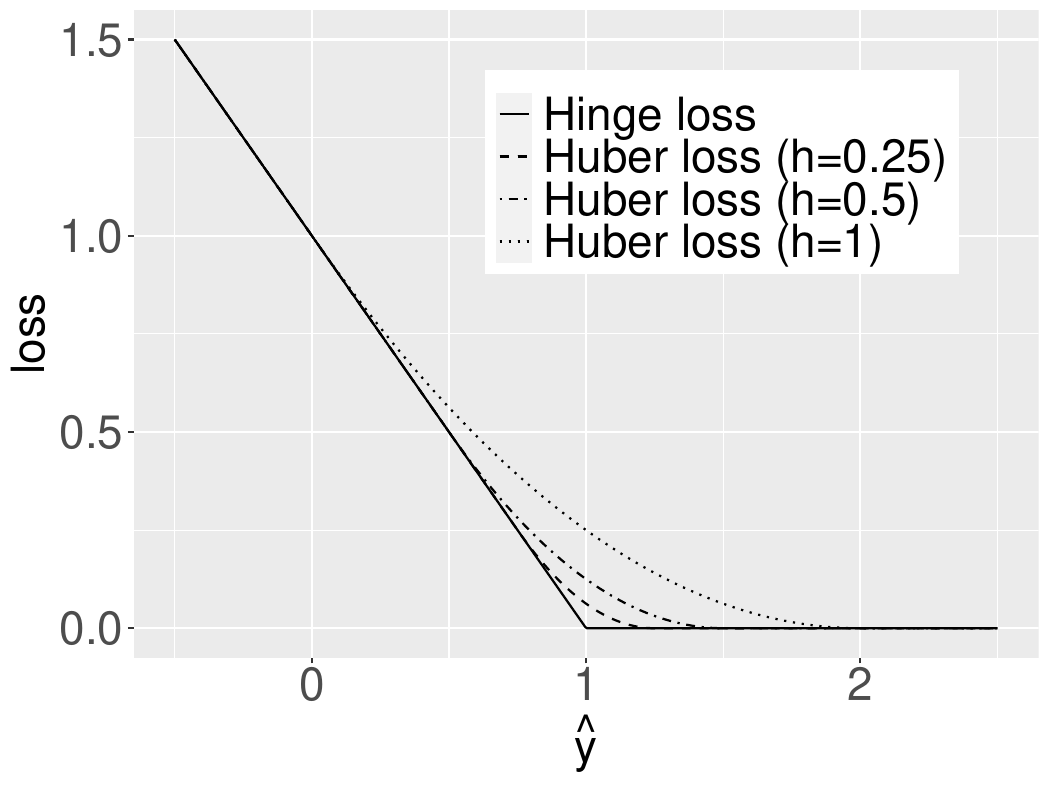}
    \caption{Comparison between Huber and hinge loss assuming $y=1$.}
    \label{fig:huber_loss}
\end{figure}

Linear SVM implicitly assumes that the given dataset is (at least approximately) linearly separable.
When this is not the case, nonlinear SVM is a better choice. 
Intuitively, nonlinear SVM first maps the potentially linearly non-separable input data in the original space to a higher dimension in which the data is linearly separable, then uses linear SVM in the higher-dimensional space. Performing this mapping directly suffers from the curse of dimensionality, and computations on the higher-dimensional dataset quickly become prohibitively expensive. 
For that reason, the kernel trick is used so that SVM can be applied easily in practice.
Briefly, rather than explicitly transforming the data to the higher-dimensional space, the kernel trick utilizes a kernel function $k$ to produce a similarity score between two datapoints in the original dimension.
This is more computationally efficient than performing computations in the higher-dimensional space.
One popular kernel function is the Gaussian or radial kernel
\begin{equation}
    k(\mathbf{x}, \mathbf{x}^\prime) = \exp\big(-\beta|\!|\mathbf{x}-\mathbf{x}^\prime|\!|_2^2\big),
    \label{eq:Gaussian_kernel}
\end{equation}
where $\beta$ is a Gaussian kernel hyperparameter and equals to $p^{-1}$ by default. 
The optimized predictor function at $\mathbf{x}$ is a linear combination of kernel functions
\begin{equation}\label{eqn:y_kernel}
    \hat{y}=f(\mathbf{x}) = \textstyle\sum_{i=1}^na_ik(\mathbf{x}_i, \mathbf{x}),
\end{equation}
where $\mathbf{x}_i$ is the input of the original dataset.

When there is no privacy concern, one may release estimated $a_i$ and the observed input $\mathbf{x}_i$, which can be plugged in Eqn \eqref{eqn:y_kernel} to predict the label of a given data point $\mathbf{x}$.
For privacy-preserving analysis, this practice poses problems due to the direct release of $\mathbf{x}_i$.
\citet{chaudhuri2011} avoids this issue by using random projections to approximate the desired kernel function.
Specifically, the algorithm first randomly samples $D$ vectors $\mathbf{z}_j$ based on the desired kernel function according to the approximation technique from \citet{Rahimi2007, Rahimi2008}.
The algorithm then produces $D$-dimensional data $\mathbf{v}_i$ (using $\mathbf{z}_j$) for each $\mathbf{x}_i$, representing an approximate projection of each of the original $\mathbf{x}_i$ onto the kernel space.
Finally, the differentially private ERM algorithm for linear SVM is run on the new dataset $(\mathbf{v}_i,y_i)$.
The vectors $\mathbf{z}_j$ are not functions of the observed dataset, meaning the privacy-preserving linear SVM algorithm satisfies $\epsilon$- DP for a given $\epsilon$.
Therefore, this algorithm also satisfies $\epsilon$-DP when releasing the sampled vectors $\mathbf{z}_j$ and the estimated coefficients from the linear SVM.

\pkg{DPpack} implements differentially private SVM via the \code{svmDP} \pkg{R6} class using the framework provided by \code{EmpiricalRiskMinimizationDP.CMS}.
Like the logistic regression model, using the SVM model requires three steps.
The first step is to construct an \code{svmDP} object.
The constructor for this class accepts a callable function \code{regularizer} for the regularizer function, a privacy budget parameter \code{eps}, a regularization constant \code{gamma}, a string \code{perturbation.method} indicating whether to use the output or the objective perturbation algorithm, a string \code{kernel} for the kernel used in SVM, and a constant \code{huber.h} defining the $h$ value in the Huber loss in Eqn \eqref{eqn:hinge}. 
Setting \code{regularizer} to \code{`l2'} uses the $\ell_2$ regularization function and its gradient.
The \code{perturbation.method} argument operates identically to the argument of the same name used in constructing a \code{LogisticRegressionDP} object, and expects the user to verify the same requirements for the \code{regularizer} and \code{regularizer.gr} functions.
The \code{kernel} argument can be set to either \code{`linear'} or \code{`Gaussian'}. 
In the former case, linear SVM is run using the specified predictor function and the Huber loss; for the latter, the Gaussian kernel approximation algorithm is run, where the constructor also requires the specification of two additional arguments: \code{D} to indicate the dimensionality of the projection dataset $\mathbf{v}_i$ and \code{kernel.param} to indicate the value of $\beta$ in Eqn \eqref{eq:Gaussian_kernel}.

After constructing the \code{svmDP} object, the second step is to train the model on a dataset. 
Users should call the \code{\$fit} method of the constructed object. 
This method accepts input data \code{X} with labels \code{y}, numeric vector global or public bounds for each column of \code{X}, and a boolean \code{add.bias} indicating whether to add a column of 1s to \code{X} to act as a bias variable\footnote{The \code{add.bias} argument functions analogously to the respective argument for the \code{LogisticRegressionDP} class, and we again recommend not using a bias term if at all possible.}.
If \code{kernel} is set to \code{`linear'} when the object is constructed in the first step, the method finds $\hat{\boldsymbol\theta}$ satisfying \code{eps}-DP, where \code{eps} is the privacy budget provided when the object is initialized.
If \code{kernel} is set to \code{`Gaussian'}, the method first converts \code{X} to the $D$-dimensional new dataset \code{V}, then finds $\hat{\boldsymbol\theta}$ corresponding to \code{V} satisfying \code{eps}-DP.

For linear SVM, the same pre-processing of \code{X} using the provided bounds on its columns and subsequent post-processing of the private coefficients is performed. 
The results are again stored in the \code{\$coeff} attribute.
For Gaussian kernel nonlinear SVM, the mapping from \code{X} to \code{V} ensures that each row $\mathbf{v}_i$ satisfies $\norm{\mathbf{v}_i}_2\le1$ regardless of the values of $\mathbf{x}_i$.
For this reason, no pre-processing of \code{X} is needed. 
In fact, providing bounds on the columns of \code{X} when calling the \code{\$fit} method is unnecessary for the Gaussian kernel approximation.

The third and final step is to release the estimated coefficients \code{\$coeff}\footnote{For Gaussian kernel SVM, the dimension conversion function, \code{\$XtoV}, can also be released in conjunction with \code{\$coeff}.} or use them to predict the labels given a set of new datapoints. 
For the latter, the \code{\$predict} method can be used, which accepts input \code{X} of the same form (i.e. dimensions, variable order, etc.) as the one provided to the \code{\$fit} method, as well as boolean \code{add.bias} and boolean \code{raw.value} arguments\footnote{The \code{add.bias} and \code{raw.value} arguments operate analogously to the respective arguments for \code{LogisticRegressionDP}.}. 
It returns a matrix of predicted values corresponding to each row of \code{X}.

The following example shows how to use the \code{svmDP} class.
\begin{example}
    # Simulate training dataset X and y, and testing dataset Xtest and ytest
    N <- 400
    X <- data.frame()
    y <- data.frame()
    for (i in (1:N)){
      Xtemp <- data.frame(x1 = rnorm(1,sd=.28) , x2 = rnorm(1,sd=.28))
      if (sum(Xtemp^2)<.15) ytemp <- data.frame(y=0)
      else ytemp <- data.frame(y=1)
      X <- rbind(X, Xtemp)
      y <- rbind(y, ytemp)
    }
    
    # Train-test split
    Xtest <- X[seq(1,N,10),]
    ytest <- y[seq(1,N,10),,drop=FALSE]
    X <- X[-seq(1,N,10),]
    y <- y[-seq(1,N,10),,drop=FALSE]
    
    # Construct object for SVM
    regularizer <- `l2'
    eps <- 1
    gamma <- 0.1
    kernel <- `Gaussian'
    D <- 20
    svmdp <- svmDP$new(regularizer, eps, gamma, kernel=kernel, D=D)
    
    # Fit with data (note no bounds necessary because kernel=`Gaussian')
    svmdp$fit(X, y) # No bias term
    
    # Predict new data points
    predicted.y <- svmdp$predict(Xtest)
    n.errors <- sum(predicted.y!=ytest)
\end{example}

\subsection{Outcome Weighted Learning}

Outcome weighted learning (OWL) \citep{Zhao2012} is a technique used for determining individualized treatment rules (ITRs), and can be categorized broadly as a method for causal inference ML.
The primary goal of ITR is to derive a treatment assignment function that maps an individual's set of characteristics to a treatment that maximizes the expected benefit to that individual.
A significant strength of OWL is its ability to tailor treatment assignments in response to individual characteristics, rather than using a one-size-fits-all approach.
Its development was motivated by developing techniques for precision medicine \citep{Council2011toward,Collins2015new} in randomized clinical trials, though other potential applications include personalized advertising \citep{Wang2015Robust, Sun2015}, and recommender systems \citep{Schnabel2016, Lada2019}.
The original ITR problem considered in \citet{Zhao2012} is to find the treatment assignment function $T$ by maximizing the expected treatment benefit $E[\frac{B}{P(A|\mathbf{x})}\mathbbm{1}(A=T(\mathbf{x}))]$, where $A$ and $B$ are random variables representing the randomly assigned treatment and observed benefit, respectively, $P$ is the conditional probability function, and $\mathbbm{1}$ is the indicator function.

The key insight in \citet{Zhao2012} that produces the OWL framework is that the expected benefit problem can be reformulated as the weighted SVM problem
\begin{equation}
    %\mathcal{L}(\boldsymbol\theta, \D) + \frac{\gamma}{n}R(\boldsymbol\theta) = 
    \frac{1}{n}\sum_{i=1}^n \frac{B_i}{P(A_i|\mathbf{x}_i)}\max(0,1-A_i\mathbf{x}_i\boldsymbol\theta) + \frac{\gamma}{n}\norm{\boldsymbol\theta}_2,
\end{equation}
where $\mathcal{D} = (\mathbf{x}, \mathbf{A}, \mathbf{B})$.
It is straightforward to see that this is a generalization of the standard SVM case to the case where individual observations are unevenly weighted according to the weights $w_i = \frac{B_i}{P(A_i|\mathbf{x}_i)}$.
\citet{Giddens2023} showed that weighted ERM in general can be made to satisfy $\epsilon$-DP via output perturbation, as long as a global bound on the weights is provided.
In order to incorporate OWL into \pkg{DPpack}, we generally implement DP weighted ERM through the general \code{WeightedERMDP.CMS} class.
The \code{svmDP} class described in the previous section inherits from \code{WeightedERMDP.CMS}, which permit users to provide unequal observation weights such as those found in OWL.

The following example shows how to use the \code{svmDP} class with weighted observations.
\begin{example}
    # Simulate train dataset X and y, and test dataset Xtest and ytest
    N <- 200
    K <- 2
    X <- data.frame()
    y <- data.frame()
    for (j in (1:K)){
      t <- seq(-.25, .25, length.out = N)
      if (j==1) m <- rnorm(N,-.2, .1)
      if (j==2) m <- rnorm(N, .2, .1)
      Xtemp <- data.frame(x1 = 3*t , x2 = m - t)
      ytemp <- data.frame(matrix(j-1, N, 1))
      X <- rbind(X, Xtemp)
      y <- rbind(y, ytemp)
    }
    # Bounds for X based on construction
    upper.bounds <- c( 1, 1)
    lower.bounds <- c(-1,-1)
    
    # Train-test split
    Xtest <- X[seq(1,(N*K),10),]
    ytest <- y[seq(1,(N*K),10),,drop=FALSE]
    X <- X[-seq(1,(N*K),10),]
    y <- y[-seq(1,(N*K),10),,drop=FALSE]

    # Weights
    weights <- rep(1, nrow(y)) # Uniform weighting
    weights[nrow(y)] <- 0.5 # Half weight for last observation
    wub <- 1 # Upper bound on weights
    
    # Construct object for logistic regression
    regularizer <- function(coeff) coeff%*%coeff/2
    regularizer.gr <- function(coeff) coeff
    eps <- 1
    gamma <- 0.1
    perturbation.method <- `output'
    svmdp <- svmDP$new(regularizer, eps, gamma, perturbation.method,
                      regularizer.gr = regularizer.gr)
    
    # Fit with data
    svmdp$fit(X, y, upper.bounds, lower.bounds, weights=weights,
              weights.upper.bound=wub)
    svmdp$coeff # Gets private coefficients
    #> 1.547518 13.456029
    
    # Predict new data points
    predicted.y <- svmdp$predict(Xtest)
    n.errors <- sum(predicted.y!=ytest)
\end{example}

\subsection{Linear Regression}

The differentially private ERM algorithm for regression problems can be applied to linear regression.
The loss function given a single observation is the squared error
\begin{equation}
    \ell_i(\boldsymbol\theta) = \frac{(f_{\boldsymbol\theta}(\mathbf{x}_i)-y_i)^2}{2},
\end{equation}
where the $f_{\boldsymbol\theta}(\mathbf{x}_i) = \mathbf{x}_i\boldsymbol\theta$ is the predicted value of $y$. 
The regularized objective function given data $\mathcal{D} = (\mathbf{x}, \mathbf{y})$ is
\begin{equation}
    %\mathcal{L}(\boldsymbol\theta, \mathcal{D}) + \frac{\gamma}{n}R(\boldsymbol\theta) =  
    \frac{1}{n}\sum_{i=1}^n \frac{(\mathbf{x}_i\boldsymbol\theta - y_i)^2}{2} + \frac{\gamma}{n}R(\boldsymbol\theta).
\end{equation}
In order to satisfy all of the assumptions needed to ensure privacy for the ERM algorithm, we must assume that each $\mathbf{x}_i$, as well as the coefficient vector $\boldsymbol\theta$ have a bounded $\ell_2$ norm. 
For the purposes of \pkg{DPpack}, we choose to bound $\mathbf{x}_i$ by $\norm{\mathbf{x}_i}_2\le\sqrt{p}$ and the coefficient vector by $\norm{\boldsymbol\theta}_2\le\sqrt{p}$, where $p$ is the number of predictors, following \citet{Kifer2012}.
This implies that each value of the output $y$ is contained in $[-p, p]$ automatically.
With these assumptions, the conditions for the differentially private regression ERM algorithm are satisfied for linear regression with parameters $\mathbb{F} = \{\boldsymbol\theta\in \mathbb{R}^p : \norm{\boldsymbol\theta}_2 \le \sqrt{p}\}$, $\zeta = 2p^{3/2}$, and $\lambda = p$.

\pkg{DPpack} implements differentially private linear regression via the \code{LinearRegressionDP} \pkg{R6} class using the framework of \code{EmpiricalRiskMinimizationDP.KST}.
Similar to the classification models, this is done in three steps: constructing a \code{LinearRegressionDP} object, training the model by calling the \code{\$fit} method of the constructed object, and releasing the trained coefficients \code{\$coeff} or using them for prediction via \code{\$predict}.
The arguments and specification of the construction and prediction steps are similar to the those for \code{LogisticRegressionDP} and \code{svmDP}, so we refer the reader to those sections for explanations of the arguments.

There are a few minor differences in the training step via the \code{\$fit} method when compared to \code{LogisticRegressionDP} and \code{svmDP}.
First, the arguments \code{lower.bounds} and \code{upper.bounds} should be vectors representing the global or public bounds on both the columns of \code{X} and the values of \code{y}. 
If \code{X} has $n$ columns, then each vector of bounds should be of length $n+1$. 
The first $n$ elements of the vectors correspond to the bounds on the $n$ columns of \code{X}, and are in the same order as the respective columns.
The last element of the vectors corresponds to the bounds on the values in \code{y}.
Similar to the training step for \code{LogisticRegressionDP} and \code{svmDP}, these bounds are used to pre-process \code{X} and \code{y} so that they satisfy the necessary constraints for privacy. 
The pre-processing/post-processing is essentially the same for \code{LinearRegressionDP} as it is for the classification methods, except that \code{y} is also shifted (and the resulting coefficients unshifted) to be centered at $0$ if \code{add.bias} is set to \code{TRUE}.

The following example shows how to use the \code{LinearRegressionDP} class.
\begin{example}
    # Simulate an example dataset
    n <- 500
    X <- data.frame(X=seq(-1,1,length.out = n))
    true.theta <- c(-.3,.5) # First element is bias term
    p <- length(true.theta)
    y <- true.theta[1] + as.matrix(X)%*%true.theta[2:p] + rnorm(n=n,sd=.1)
    # Bounds based on construction. We assume y has values between -p and p
    upper.bounds <- c(1, p) # Bounds for X and y
    lower.bounds <- c(-1, -p) # Bounds for X and y
    
    # Construct object for linear regression
    regularizer <- 'l2'
    eps <- 1
    delta <- 0.01 # Indicates to use approximate (1,0.01)-DP
    gamma <- 1
    lrdp <- LinearRegressionDP$new('l2', eps, delta, gamma)
    
    # Fit with data
    lrdp$fit(X, y, upper.bounds, lower.bounds, add.bias=TRUE)
    lrdp$coeff # Gets private coefficients
    #> -0.3812353  0.3704237
    
    # Predict new data points
    Xtest <- data.frame(X=c(-.5, -.25, .1, .4))
    predicted.y <- lrdp$predict(Xtest, add.bias=TRUE)
\end{example}

\subsection{Hyperparameter Tuning}

Model training often involves the selection of hyperparameter values such as, for example, the constant $\gamma$ for the regularizer in Eqn \eqref{eq:regularized_ERM} or \eqref{eq:private_regularized_ERM}. 
Poorly selected values for these hyperparameters can result in models with poor performance. 
Often, hyperparameter selection relies on the observed dataset itself, resulting in privacy costs in the setting of privacy-preserving analysis. \citet{chaudhuri2011} presents an algorithm for privacy-preserving hyperparameter tuning based on the exponential mechanism, which is implemented in \pkg{DPpack}.

For binary classification models, differentially private hyperparameter tuning is realized in \pkg{DPpack} via the \code{tune\_classification\_model} function.
It accepts as inputs a list of model objects \code{models} of the same type\footnote{Such as a list of \code{LogisticRegressionDP} objects. Each model object must have the same privacy budget parameters.}, each constructed with a different value from the set of potential hyperparameter values, observed input \code{X}, labels \code{y}, vectors representing global or public bounds on the columns of \code{X}, and a boolean \code{add.bias} argument.
The function splits \code{X} and \code{y} into $m+1$ equally sized sub-datasets, where $m$ is the number of candidate models, and trains each model on one of the sub-datasets.
The negative of the misclassification frequency by each model on the labels of the final sub-dataset is used as the utility function $u$ for the exponential mechanism.
It can be easily seen that the $\ell_1$-global sensitivity of $u$ is $\Delta_{1,u} = 1$. 
The exponential mechanism is used to select and return one of the trained models provided with $\epsilon$-DP.

For example, assume one wishes to select a constant for the $l_2$ regularizer from the set $\{100, 1, 0.0001\}$ for privacy-preserving logistic regression.
To do this, three objects from the \code{LogisticRegressionDP} class are constructed with the same privacy budget parameter \code{eps} and initialized with one of the three constant values.
The three model objects are then passed into the tuning function, and the exponential mechanism returns one of them. 
The remaining arguments for the tuning function, \code{X}, \code{y}, \code{upper.bounds}, \code{lower.bounds}, and \code{add.bias}, should be given values according to their respective descriptions in the \code{\$fit} method of the corresponding \pkg{R6} class being used. 
An example of this situation follows.
\begin{example}
    # Simulate a training dataset (X, y), and testing dataset (Xtest, ytest)
    N <- 200
    K <- 2
    X <- data.frame()
    y <- data.frame()
    for (j in (1:K)){
      t <- seq(-.25,.25,length.out = N)
      if (j==1) m <- rnorm(N,-.2,.1)
      if (j==2) m <- rnorm(N, .2,.1)
      Xtemp <- data.frame(x1 = 3*t , x2 = m - t)
      ytemp <- data.frame(matrix(j-1, N, 1))
      X <- rbind(X, Xtemp)
      y <- rbind(y, ytemp)
    }
    # Bounds for X based on construction
    upper.bounds <- c( 1, 1)
    lower.bounds <- c(-1,-1)
    
    # Train-test split
    Xtest <- X[seq(1,(N*K),10),]
    ytest <- y[seq(1,(N*K),10),,drop=FALSE]
    X <- X[-seq(1,(N*K),10),]
    y <- y[-seq(1,(N*K),10),,drop=FALSE]
    y <- as.matrix(y)
    
    # Grid of gamma values for tuning logistic regression model
    grid.search <- c(100, 1, .0001)
    # Construct objects for logistic regression parameter tuning
    eps <- 1 # Privacy budget should be the same for all models
    lrdp1 <- LogisticRegressionDP$new("l2", eps, grid.search[1])
    lrdp2 <- LogisticRegressionDP$new("l2", eps, grid.search[2])
    lrdp3 <- LogisticRegressionDP$new("l2", eps, grid.search[3])
    models <- c(lrdp1, lrdp2, lrdp3)
    
    # Tune using data and bounds for X based on its construction
    tuned.model <- tune_classification_model(models, X, y, 
                                             upper.bounds, lower.bounds)
    tuned.model$gamma # Gives resulting selected hyperparameter
    #> 0.0001
   
    # tuned.model can be used in the same way as any 
    # LogisticRegressionDP model
    predicted.y <- tuned.model$predict(Xtest)
    n.errors <- sum(predicted.y!=ytest)
\end{example}

\pkg{DPpack} also implements differentially private hyperparameter tuning for linear regression via the \code{tune\_linear\_regression\_model} function.
This function was inspired by the binary classification hyperparameter tuning algorithm from \citet{chaudhuri2011} as well as the feature selection algorithm for high-dimensional regression from \citet{Kifer2012}.
This function accepts the same input arguments as the \code{tune\_classification\_model} function, except that the \code{models} argument should be a list of constructed \code{LinearRegressionDP} objects with the same privacy budget parameters \code{eps} and \code{delta}.
The function then splits the provided data \code{X} and \code{y} into $m+1$ equally sized sub-datasets, where $m$ is the number of provided models, and trains each model on one of the sub-datasets.
The negative of the square of the Euclidean distance between the predicted values and the true values for the remaining sub-dataset is defined to be the utility function $u$ for each of the models, the $\ell_1$-global sensitivity for which is given in Theorem \ref{thm:l1_sensitivity_u}. 
Finally, the exponential mechanism is used to select and return one of the trained models provided with $($\code{eps}, \code{delta}$)$-DP.

\begin{theorem}\label{thm:l1_sensitivity_u}
    Let $c_0$ and $c_1$ be the global or public lower and upper bounds, respectively, on the possible values of $y_i$. 
    Let $g$ be the linear regression model with coefficient parameters $\boldsymbol\theta$. 
    For a dataset $\mathcal{D} \!=\!(\mathbf{x}_i,y_i)$ with $n$ rows, define $u(\mathcal{D}, g)\!=\! -\sum_{i=1}^n(g(x_i)-y_i)^2$. 
    The $\ell_1$-global sensitivity of $u$ is given by 
    \begin{equation}
        \Delta_{1,u} = (c_1-c_0)^2.
    \end{equation}
\end{theorem}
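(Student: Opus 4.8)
The plan is to work directly from the definition of $\ell_1$-global sensitivity applied to the utility function $u$, treating the candidate model $g$ (equivalently, its coefficient vector $\boldsymbol\theta$) as a fixed output of the exponential mechanism and maximizing the change in $u(\mathcal{D},g)$ over neighboring datasets. Because the tuning procedure splits the data into equally sized sub-datasets, the relevant notion is that of bounded neighboring datasets, so $D_1$ and $D_2$ contain the same number of rows and differ in exactly one observation, say in row $j$, with $D_1$ holding $(\mathbf{x}_j,y_j)$ and $D_2$ holding $(\tilde{\mathbf{x}}_j,\tilde y_j)$. All remaining summands of $u$ cancel, leaving
\begin{equation}
    |u(D_1,g) - u(D_2,g)| = \left| (g(\mathbf{x}_j) - y_j)^2 - (g(\tilde{\mathbf{x}}_j) - \tilde y_j)^2 \right|.
\end{equation}

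First I would establish that each individual squared residual lies in $[0,(c_1-c_0)^2]$. The responses satisfy $y_j,\tilde y_j \in [c_0,c_1]$ by hypothesis, and the predictions $g(\mathbf{x}_j),g(\tilde{\mathbf{x}}_j)$ are clipped (or otherwise constrained) to the same interval $[c_0,c_1]$, so that each residual has absolute value at most $c_1-c_0$. Consequently each squared term lies in $[0,(c_1-c_0)^2]$. The elementary fact that two numbers both lying in an interval $[0,M]$ differ by at most $M$ then yields $|u(D_1,g)-u(D_2,g)| \le (c_1-c_0)^2$. Since this bound does not depend on $g$, taking the maximum over candidate models and over neighboring pairs gives $\Delta_{1,u}\le(c_1-c_0)^2$.

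To see the bound is attained, so that equality holds, I would exhibit an extremal configuration: choose $D_1$ so that one residual vanishes, e.g.\ $g(\mathbf{x}_j)=y_j$, and choose $D_2$ so that the corresponding residual is maximal, e.g.\ $g(\tilde{\mathbf{x}}_j)=c_1$ and $\tilde y_j=c_0$ (or the reverse), giving a change of exactly $(c_1-c_0)^2$. This requires that the range of the fixed predictor $g$ over the feasible feature region cover the endpoints of $[c_0,c_1]$; I would note this is generic given $\norm{\mathbf{x}_i}_2\le\sqrt{p}$ and the admissible coefficient set, and in any case the upper bound is what the exponential mechanism requires.

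The step I expect to need the most careful justification is the clipping assumption that forces $g(\mathbf{x}_i)\in[c_0,c_1]$. Without it the residual could be as large as $p+\max(|c_0|,|c_1|)$ in magnitude, since an unclipped linear prediction only satisfies $|g(\mathbf{x}_i)|\le p$ under the norm constraints of the regression setup, and the clean answer $(c_1-c_0)^2$ would fail. I would therefore make explicit that the utility is evaluated on predictions truncated to the response range, a harmless post-processing step that leaves the DP guarantee of the exponential mechanism intact. Given that reduction, the remainder is the routine difference-of-squares bound above, and I would remark that the same value $(c_1-c_0)^2$ arises under the unbounded neighboring definition as well, where adding or removing a single observation changes $u$ by a single term $(g(\mathbf{x}_j)-y_j)^2\in[0,(c_1-c_0)^2]$.
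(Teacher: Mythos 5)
Your proposal follows the same skeleton as the paper's proof: for neighboring datasets all but one summand of $u$ cancels, the two surviving squared residuals are nonnegative and each lies in $[0,M]$ with $M=(c_1-c_0)^2$, and two numbers in $[0,M]$ differ by at most $M$; the unbounded case is then dispatched the same way. The one genuine divergence is how the key bound on a single squared residual is justified. The paper simply asserts $\max_{x_1,y_1}(x_1\boldsymbol\theta - y_1)^2 = (c_1-c_0)^2$ ``as a result of the assumptions made on the bounds of $\norm{x_1}_2$, $\norm{\boldsymbol\theta}_2$, and $|y_1|$,'' whereas you ground it in an explicit truncation of the predictions to $[c_0,c_1]$. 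Your caution here is warranted: the norm constraints $\norm{\mathbf{x}_i}_2\le\sqrt{p}$ and $\norm{\boldsymbol\theta}_2\le\sqrt{p}$ only confine the prediction to $[-p,p]$, so the paper's equality is literally valid only when $[c_0,c_1]$ coincides with the automatic range $[-p,p]$; for tighter user-supplied bounds on $y$ the residual can reach $p+\max(|c_0|,|c_1|)$ in magnitude, exactly as you note, and the clean constant $(c_1-c_0)^2$ requires the clipping you introduce (a harmless post-processing step, as you say). You additionally exhibit an extremal configuration showing the bound is attained, which the paper states as an equality but never argues. So the proposal is correct and structurally the same as the paper's, but it makes explicit, and patches, the step the paper leaves implicit.
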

\begin{proof}
    Let $\mathcal{D}_1$ and $\mathcal{D}_2$ be (bounded) neighboring datasets.
    Without loss of generality, assume they differ only in their first element and define $(x_1, y_1) \in \mathcal{D}_1$ and $(x_1^\prime,y_1^\prime) \in \mathcal{D}_2$.
    Then
    \begin{align*}
        \Delta_{1,u} &= \max_{g} \max_{\mathcal{D}_1, \mathcal{D}_2} |u(\mathcal{D}_1, g) - u(\mathcal{D}_2, g)| \\
        &= \max_{g} \max_{\mathcal{D}_1, \mathcal{D}_2} |(g(x_1)-y_1)^2 - (g(x_1^\prime) - y_1^\prime)^2|.
    \end{align*}
    Given that $(g(x_1)-y_1)^2\ge0$ and $(g(x'_1)-y'_1)^2\ge0$ for all $g$ and for all $\mathcal{D}_1$ and $\mathcal{D}_2$ with $d(\mathcal{D}_1, \mathcal{D}_2)=1$, %this is maximized by $\mathcal{D}_1$, $\mathcal{D}_2$, and $g$ such that $g(x_1^\prime) = y_1^\prime$ and $g(x_1)$ is as far from $y_1$ as possible. 
    \begin{align*}
        \Delta_{1,u} &\le \max_{x_1, y_1} (g(x_1) - y_1)^2 = \max_{x_1, y_1} (x_1\boldsymbol\theta - y_1)^2 = (c_1 - c_0)^2,
    \end{align*}
    where we note the last step is a result of the assumptions made on the bounds of $\norm{x_1}_2$, $\norm{\boldsymbol\theta}_2$, and $|y_1|$ in order to ensure DP for linear regression.
    For unbounded DP, $\Delta_{1,u}=\max_{g} \max_{\mathcal{D}_1, \mathcal{D}_2} (g(x_1)-y_1)^2=(c_1 - c_0)^2$, the same as in the bounded case.
\end{proof}

Similar to the \code{tune\_classification\_model} function, the list of models provided to \code{tune\_linear\_regression\_model} should be a list of objects constructed using the \pkg{R6} class \code{LinearRegressionDP} with a different hyperparameter value and the same privacy budget parameters provided to each model. 
The remaining arguments for the tuning function, \code{X}, \code{y}, \code{upper.bounds}, \code{lower.bounds}, and \code{add.bias}, should be given values according to their respective descriptions in the \code{\$fit} method of the \code{LinearRegressionDP} class. 
An example of using the tuning function for the regularization constant for linear regression follows.
\begin{example}
    # Simulate an example dataset
    n <- 500
    X <- data.frame(X=seq(-1,1,length.out = n))
    true.theta <- c(-.3,.5) # First element is bias term
    p <- length(true.theta)
    y <- true.theta[1] + as.matrix(X)%*%true.theta[2:p] + rnorm(n=n,sd=.1)
    # Bounds for X and y based on their construction
    upper.bounds <- c( 1, 2) # Bounds for X and y
    lower.bounds <- c(-1,-2) # Bounds for X and y
    
    # Grid of possible gamma values for tuning linear regression model
    grid.search <- c(100, 1, .0001)
    # Construct objects for logistic regression parameter tuning
    # Privacy budget should be the same for all models
    eps <- 1
    delta <- 0.01
    linrdp1 <- LinearRegressionDP$new("l2", eps, delta, grid.search[1])
    linrdp2 <- LinearRegressionDP$new("l2", eps, delta, grid.search[2])
    linrdp3 <- LinearRegressionDP$new("l2", eps, delta, grid.search[3])
    models <- c(linrdp1, linrdp2, linrdp3)
    tuned.model <- tune_linear_regression_model(models, X, y, upper.bounds,
                                                lower.bounds, add.bias=TRUE)
    tuned.model$gamma # Gives resulting selected hyperparameter
    #> 100
    
    # tuned.model result can be used the same as a trained 
    # LogisticRegressionDP model
    tuned.model$coeff # Gives coefficients for tuned model
    #> -0.5038190  0.2589978
    
    # Simulate a test dataset for prediction
    Xtest <- data.frame(X=c(-.5, -.25, .1, .4))
    predicted.y <- tuned.model$predict(Xtest, add.bias=TRUE)
\end{example}

% \section{}
% \label{app:theorem}

% % Note: in this sample, the section number is hard-coded in. Following
% % proper LaTeX conventions, it should properly be coded as a reference:

% %In this appendix we prove the following theorem from
% %Section~\ref{sec:textree-generalization}:

% In this appendix we prove the following theorem from
% Section~6.2:

% \noindent
% {\bf Theorem} {\it Let $u,v,w$ be discrete variables such that $v, w$ do
% not co-occur with $u$ (i.e., $u\neq0\;\Rightarrow \;v=w=0$ in a given
% dataset $\dataset$). Let $N_{v0},N_{w0}$ be the number of data points for
% which $v=0, w=0$ respectively, and let $I_{uv},I_{uw}$ be the
% respective empirical mutual information values based on the sample
% $\dataset$. Then
% \[
% 	N_{v0} \;>\; N_{w0}\;\;\Rightarrow\;\;I_{uv} \;\leq\;I_{uw}
% \]
% with equality only if $u$ is identically 0.} \hfill\BlackBox

% \section{}

% \noindent
% {\bf Proof}. We use the notation:
% \[
% P_v(i) \;=\;\frac{N_v^i}{N},\;\;\;i \neq 0;\;\;\;
% P_{v0}\;\equiv\;P_v(0)\; = \;1 - \sum_{i\neq 0}P_v(i).
% \]
% These values represent the (empirical) probabilities of $v$
% taking value $i\neq 0$ and 0 respectively.  Entropies will be denoted
% by $H$. We aim to show that $\fracpartial{I_{uv}}{P_{v0}} < 0$....\\

% {\noindent \em Remainder omitted in this sample. See http://www.jmlr.org/papers/ for full paper.}

% \vskip 0.2in
\bibliography{references}

\end{document}